\documentclass{article}

\usepackage{microtype}
\usepackage{graphicx}
\usepackage{subcaption}
\usepackage{booktabs}
\usepackage{tabularx}
\usepackage{multirow}
\usepackage{enumitem}
\usepackage{longtable}
\usepackage{pifont}
\usepackage{siunitx}

\usepackage{hyperref}

\usepackage[accepted]{icml2026}

\usepackage{amsmath}
\usepackage{amssymb}
\usepackage{mathtools}
\usepackage{amsthm}

\usepackage[capitalize,noabbrev]{cleveref}

\theoremstyle{plain}
\newtheorem{theorem}{Theorem}[section]

\newtheorem{lemma}[theorem]{Lemma}

\theoremstyle{definition}
\newtheorem{definition}[theorem]{Definition}

\theoremstyle{remark}

\usepackage{xurl}
\usepackage[table]{xcolor}

\begin{document}

\newcommand{\email}[1]{%
  \begingroup%
  \href{mailto:#1}{\nolinkurl{#1}}%
  \endgroup%
}

\twocolumn[
  \icmltitle{Incremental BPE Tokenization}

  \begin{icmlauthorlist}
    \icmlauthor{Shenghu Jiang}{buaa,sensetime}
    \icmlauthor{Ruihao Gong}{buaa,sensetime}
  \end{icmlauthorlist}

  \icmlaffiliation{buaa}{Beihang University}
  \icmlaffiliation{sensetime}{SenseTime Research}

  \icmlcorrespondingauthor{Ruihao Gong}{\email{gongruihao@buaa.edu.cn}}

  \icmlkeywords{Byte Pair Encoding, Tokenization, Incremental Algorithm,
    LLM Efficiency, Streaming Inference, Worst-Case Complexity}

  \vskip 0.3in
]

\printAffiliationsAndNotice{
  Email: Shenghu Jiang \textless{}\email{research@chielo.org}\textgreater{}.\,
}

\newcommand{\speedupVal}{${\sim}3\times$}
\begin{abstract}
We propose a novel algorithm for incremental Byte Pair Encoding (BPE) tokenization.
The algorithm processes each input byte in \textbf{worst-case} $\mathcal{O}(\log^2 t)$ time,
leading to an overall complexity of $\mathcal{O}(n \log^2 t)$,
where $n$ is the input length and $t$ is the maximum token length.
The algorithm incrementally maintains BPE tokenization results for every prefix of the input text,
implementing the standard BPE merge procedure defined by a fixed set of merge rules.
This enables efficient partial tokenization in streaming settings.
Functioning as a drop-in replacement for standard BPE,
our approach achieves a speedup of up to \speedupVal{} over Hugging Face's \texttt{tokenizers},
and demonstrates significant latency reductions over OpenAI's \texttt{tiktoken} on pathological inputs.
We further introduce an eager output algorithm that enables streaming output,
emitting tokens as soon as token boundaries are determined during incremental tokenization.
Overall, our results demonstrate that BPE tokenization can be performed incrementally
with strong worst-case guarantees,
while providing practical latency benefits in modern large language model pipelines.
The source code is available at \url{https://github.com/ModelTC/mtc-inc-bpe}.
\end{abstract}

\section{Introduction}

Byte Pair Encoding (BPE),
originally proposed as a data compression technique by \citet{bpe-origin},
has become the de facto tokenization method for modern large language models (LLMs)
following its adaptation by \citet{bpe-llm}.
By iteratively merging frequent adjacent symbol pairs,
BPE constructs a compact vocabulary that balances expressiveness with computational efficiency.
Consequently, highly optimized implementations
such as Hugging Face's \texttt{tokenizers} \citep{tokenizers} and OpenAI's \texttt{tiktoken} \citep{tiktoken}
have become standard components in modern LLM pipelines.

\cref{fig:tokenization-pipeline} illustrates
a representative pipeline used in modern LLM tokenization,
where BPE is applied as a core stage within a broader sequence of preprocessing steps.

\begin{figure}[htbp]
  \vskip 0.15in
  \begin{center}
    \centerline{\includegraphics[width=\columnwidth]{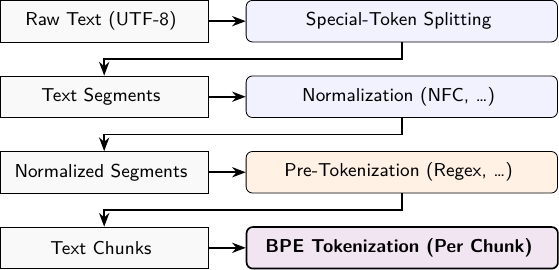}}
    \caption{
      \textbf{A representative pipeline in modern LLM tokenization.}
      Raw text is processed by a sequence of model-specific preprocessing stages
      before applying BPE tokenization to each resulting chunk.
      \textit{Our work focuses on making the BPE stage incremental
      without altering the correctness and the surrounding stages.}
    }
    \label{fig:tokenization-pipeline}
  \end{center}
  \vskip -0.15in
\end{figure}

However, these tools are predominantly designed as offline processes:
they require the entire input sequence (or a pre-segmented chunk)
to be fully observed before producing a canonical tokenization.
As a result, tokenization and prefilling are executed sequentially rather than concurrently,
which introduces additional latency in streaming and long-context inference.

At an algorithmic level,
existing implementations typically rely on heap-based priority queues to select merges over the full input,
leading to a formulation that is inherently global and difficult to adapt to byte-by-byte updates.
The formalization of BPE by \citet{formalizing} implies a structural property
where the tokenizations for every prefix of a text form a prefix tree of tokens.
This property suggests the possibility of immediate feedback and partial processing;
however, exploiting it efficiently requires algorithmic support beyond standard implementations.

In this paper, we present a novel algorithm for \textbf{incremental BPE tokenization}
that efficiently maintains BPE tokenizations for every prefix in real-time.
Our algorithm processes each input byte in a worst-case time complexity of $\mathcal{O}(\log^2 t)$,
resulting in an overall time complexity of $\mathcal{O}(n \log^2 t)$,
where $n$ is the input length and $t$ is the maximum token length.

We implement the algorithm in Rust as a drop-in replacement for core BPE stages.
Benchmark results (\cref{tab:speedup-results})
show an end-to-end speedup of up to \speedupVal{} over Hugging Face's \texttt{tokenizers}.
Furthermore, while existing tools like OpenAI's \texttt{tiktoken}
exhibit $\mathcal{O}(n^2)$ behavior on certain inputs,
our approach maintains a stable $\mathcal{O}(n \log^2 t)$ cost,
leading to significant latency reductions on pathological cases
(\cref{fig:plot-pathologic}).

Beyond raw speed and streaming input,
we introduce an \textit{eager output} mechanism for streaming output,
which emits tokens as soon as stable token boundaries are resolved.
This enables tokenization to be fully pipelined with model inference.

Profiling of current systems (Appendix~\ref{app:flamegraphs})
shows that stages other than BPE
can become bottlenecks in tokenization pipelines,
burdening both streaming input and output.
With strict worst-case complexity guarantees,
such pre-segmentation can be algorithmically redundant.

Furthermore, along with incremental updates,
the algorithm naturally supports scenarios requiring full-prefix access,
i.e., scenarios in which tokenizations of all prefixes are required,
such as deriving attention masks from the prefix tree of tokens for ``fill-in-the-middle'' (FIM) tasks
without random or heuristic truncation and re-computation.

Our results demonstrate that incremental BPE tokenization
is both algorithmically feasible and practically beneficial,
with \textbf{strict worst-case guarantees} and \textbf{exact semantic compatibility}.

\paragraph{Contributions.}
We make the following contributions:
\begin{itemize}[itemsep=0pt, leftmargin=*]
  \item
    An incremental BPE algorithm is proposed
    with a strict worst-case $\mathcal{O}(\log^2 t)$ per-byte complexity,
    ensuring exact equivalence
    with standard BPE alongside rigorous performance guarantees.

  \item
    The proposed eager output mechanism
    enables efficient, real-time streaming and pipelining with model inference.

  \item
    Our efficient implementation in Rust serves as a drop-in replacement
    that delivers a speedup of up to \speedupVal{} over current state-of-the-art tools.

  \item
    With our method, stages of traditional pipelines
    that serve solely to circumvent the limits of global BPE
    can be safely removed without performance concerns.

  \item
    Our method provides native support for full-prefix tasks
    (e.g., FIM, token healing)
    by efficiently providing tokenizations of all prefixes,
    eliminating the need for heuristic truncation or re-computation.
\end{itemize}

\section{Related Work}

Byte Pair Encoding \citep{bpe-llm} has established itself
as the de facto standard for modern LLMs,
employed by state-of-the-art architectures such as GPT-5 \citep{gpt-5} and Qwen-3 \citep{qwen-3-paper}.
Despite alternatives like WordPiece \citep{wordpiece} used in BERT \citep{bert} and Unigram \citep{unigram} adopted by T5 \citep{t5},
BPE remains the predominant choice for current models.

In terms of model architecture, usage patterns have evolved:
while earlier models typically applied BPE to the unsegmented input text,
modern LLMs increasingly adopt regex-based pre-segmentation and other normalization
to enforce linguistic boundaries and partition the input.

Leading libraries implement these strategies differently:
Hugging Face's \texttt{tokenizers} \citep{tokenizers}
provides a general-purpose implementation using priority queues,
whereas OpenAI's \texttt{tiktoken} \citep{tiktoken} specializes in the regex-based approach,
relying on segmentation to bound the cost of BPE merges.

Despite these optimizations,
production-grade tokenization remains predominantly \textit{offline}.
Whether relying on global queues or pre-segmentation,
canonical BPE tokenization is typically performed after a complete segment is available.

This architecture restricts the ability
to hide latency by pipelining tokenization with model inference at a fine-grained level.
Furthermore, the reliance on regex engines for complexity control can be brittle,
potentially introducing performance degradation or stack exhaustion
on pathological inputs such as extensive repetitions.

On the theoretical side, \citet{formalizing} analyze
the formal semantics of different BPE implementations
and note its consistency under token-wise truncation.
This observation implies that
the tokenization of a string prefix remains stable within the full tokenization,
hinting at the possibility of incremental processing.
However, their work focuses on algebraic properties rather than algorithmic construction,
and does not address the challenge of bounding the lookahead required for online updates.

In the engineering domain,
practical implementations such as the crate \texttt{bpe} in \texttt{rust-gems} \citep{rust-gems},
whose techniques are discussed by \citet{somanytokens-blog},
have adopted the Aho--Corasick automaton \citep{aho-corasick} for incremental tokenization.
The detailed comparison is discussed in Appendix~\ref{app:other-impl}.
While demonstrating practical utility,
these approaches generally lack formal worst-case complexity guarantees.

Our work bridges this gap by providing an incremental algorithm
with $\mathcal{O}(\log^2 t)$ worst-case update time
while maintaining strict compatibility with standard BPE.

\section{Structural Foundations of Incremental BPE}

In this section, we establish the structural foundations of our algorithm.
We first formalize the incremental tokenization problem
by explicitly defining the concept of the \textbf{Last Token} and deriving the recursive properties of BPE.
We then introduce a normalized representation of the BPE vocabulary to eliminate redundancy.
Finally, we construct the \textbf{Successor Forest} and the \textbf{Suffix-Successor Tree},
which organize the search space for incremental tokenization.
These structures provide the topological basis
for the theoretical properties and search algorithms presented in subsequent sections.

\subsection{Preliminaries and Definitions}
\label{sec:bpe-definition}

Let $\Sigma$ be a finite alphabet and $V \subset \Sigma^{+}$ be a finite vocabulary.
The vocabulary $V$ consists of \textbf{atomic tokens} (where $|t| = 1$) and \textbf{non-atomic tokens} (where $|t| > 1$).

A \textbf{suffix token} of a string $s$ is defined as any token $t \in V$ that is a suffix of $s$.

A \textbf{token sequence} $\varphi = [t_1, \dots, t_n]$
maps back to the string via \textbf{detokenization} $\pi(\varphi) = t_1 \dots t_n$.

\newcommand{\Tok}{\mathbb{T}}

A \textbf{dictionary} is defined by an ordered list of merge rules $D = [r_1, r_2, \dots, r_m]$.
Each rule $r_i$ specifies a pair of adjacent tokens $(x, y)$ to be merged into a new token $z = xy$.

Applying a rule $r$ to a token sequence $\varphi$, denoted as $\Tok_r(\varphi)$,
replaces occurrences of adjacent tokens $x$ and $y$ with $z$ sequentially from left to right.

The full BPE tokenization for a string $s$ with the dictionary $D$,
denoted as $\Tok_D(s)$,
is obtained by initially mapping $s$ to a sequence of characters $\varphi_0$,
and then applying the rules in $D$ according to their priority:
\begin{equation}
  \Tok_D(s) = (\Tok_{r_m} \circ \dots \circ \Tok_{r_2} \circ \Tok_{r_1})(\varphi_0)
\end{equation}
where $r_1$ represents the highest priority rule.

Note that under this definition,
the standard BPE differs from the SentencePiece \citep{sentence-piece} semantics.
Our analysis is formulated under the standard BPE merge semantics.
See Appendix~\ref{app:properizing}
for a detailed discussion on the difference between the two tokenization semantics
and the method to ``properize'' merge rules for the standard BPE.

In the remainder of this paper,
we omit the subscript $D$ and write $\Tok(s)$ when the dictionary is clear from context.

\subsection{Problem Formulation}

\paragraph{Prefix Consistency.}
Our incremental approach relies on the consistency of BPE tokenization under token-wise truncation.
As noted by \citet{formalizing} (Remark 3), a valid BPE tokenization $\Tok(\cdot)$ can be freely truncated:
the remaining sequence remains the valid tokenization of its corresponding substring.

\begin{lemma}[Prefix Consistency]
  \label{lem:prefix-consistency}
  Let $s$ be a string and $\varphi = \Tok(s)$ be its token sequence after tokenization.
  For any proper prefix $\mu \subset \varphi$ of the token sequence,
  the following identity holds:
  $\Tok\left(\pi\left(\mu\right)\right) = \mu$.
\end{lemma}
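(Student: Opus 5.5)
The plan is to follow the merge history of $s$ rule by rule and show that, up to the cut point, it coincides step by step with the merge history of the prefix string. Write $\varphi^{(0)} = \varphi_0$ for the character sequence of $s$ and $\varphi^{(i)} = \Tok_{r_i}(\varphi^{(i-1)})$, so that $\varphi^{(m)} = \varphi$. Let $\mu = [t_1,\dots,t_k]$ with $k < |\varphi|$, and let $p = |\pi(\mu)|$ be the cut position in $s$. The run on the prefix $\pi(\mu)$ is defined in the same way: $\psi^{(0)}$ is its character sequence and $\psi^{(i)} = \Tok_{r_i}(\psi^{(i-1)})$.

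\textbf{Step 1: a boundary is never destroyed once it appears in the final output.} Merges only delete boundaries, never create them. So every boundary of $\varphi$ is already a boundary of every intermediate $\varphi^{(i)}$. In particular, the position $p$ is a token boundary in all $\varphi^{(i)}$, and each $\varphi^{(i)}$ splits as a left part $L^{(i)}$ covering $s[1..p]$ followed by a right part $R^{(i)}$.

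\textbf{Step 2: induction on $i$, claiming $\psi^{(i)} = L^{(i)}$.} The base case holds because both sequences are the characters of $s[1..p]$. For the inductive step, assume $\psi^{(i-1)} = L^{(i-1)}$ and consider applying $r_i = (x,y)$ to $\varphi^{(i-1)} = L^{(i-1)} R^{(i-1)}$ with the left-to-right scan.
\begin{itemize}
  \item The scan cannot merge the last token of $L^{(i-1)}$ with the first token of $R^{(i-1)}$. Such a merge would delete boundary $p$, which contradicts Step 1.
  \item Because the scan runs left to right and its decision at each position depends only on what lies to its left and the current pair, its behavior on $L^{(i-1)}$ is identical to its behavior on $L^{(i-1)}$ taken alone.
\end{itemize}
The second point needs one justification: when we look at $L^{(i-1)}$ in isolation, there is no cross pair at all. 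The only way the two scans could diverge inside $L$ is if a cross-boundary merge occurred in the full scan, and we have already excluded that. Hence $\Tok_{r_i}(\varphi^{(i-1)}) = \Tok_{r_i}(L^{(i-1)})\,\Tok_{r_i}(R^{(i-1)})$ in the appropriate sense, and the left factor equals $\Tok_{r_i}(\psi^{(i-1)}) = \psi^{(i)}$.

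At $i = m$ the induction gives $\Tok(\pi(\mu)) = \psi^{(m)} = L^{(m)} = \mu$.

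\textbf{Main obstacle.} The main obstacle is stating the sequential left-to-right semantics precisely enough to justify this ``locality'' claim. The subtle case is overlapping occurrences, such as $x = y$ in a run $xxx$. Here greedy left-to-right matching makes each decision depend on earlier choices, but only on tokens to the left. So restricting attention to a prefix that ends at a surviving boundary does not change any decision within that prefix.

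I would formalize $\Tok_r$ as a scan with a pointer. The pointer state before position $p$ is the same in both runs. At the boundary the full run does not merge, by Step 1, and the prefix run has nothing left to merge. So the two outputs agree on the prefix.
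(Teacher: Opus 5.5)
Your proposal is correct and follows essentially the same route as the paper's proof. Both argue by induction over the merge steps, using two facts: each $\Tok_r$ only eliminates boundaries, so the cut at $|\pi(\mu)|$ survives every stage; and a single left-to-right merge step acts on the part left of a surviving boundary exactly as it would on that part alone. Your version fixes the final cut point and spells out the pointer-scan locality, including the overlapping $x = y$ case, more explicitly than the paper, which simply asserts that the merge ``is local and respects the boundary.''
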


Intuitively, this lemma holds because each BPE merge step is a \textbf{boundary elimination} process
visualized in \cref{fig:prefix-consistency}.
We provide an independent formal proof in Appendix~\ref{app:prefix-consistency}.

This lemma ensures that the tokenization of a growing string
can be modeled as extending the previous valid tokenization of some shorter prefix
rather than re-computing from scratch.

\paragraph{The Last Token.}
Based on this consistency, we define the \textbf{Last Token}, denoted as $\theta(s)$.
For any non-empty string $s$, $\theta(s)$ is the last token in its BPE tokenization sequence:
$\theta(s) = \operatorname{last}\left(\Tok\left(s\right)\right)$.

According to the Prefix Consistency lemma (Lemma~\ref{lem:prefix-consistency}),
since removing the last token leaves a valid tokenization sequence for the remaining string prefix,
we can express the tokenization recursively:
\begin{equation}
  \Tok(s) = \Tok\left(s_{\mathrm{pre}}\right) \oplus \left[\theta\left(s\right)\right]
\end{equation}
where $s_{\mathrm{pre}}$ is the string $s$ with the suffix string of $\theta(s)$ removed,
and $\oplus$ denotes sequence concatenation.

In terms of complexity, the full-prefix tokenization requires maintaining $\Theta(|s|)$ states.

\paragraph{Prefix Tree of Tokens.}
This recursive relationship implies that
the tokenizations of all prefixes naturally form a hierarchical structure.
Strictly speaking, this structure constitutes a forest,
as a tokenization sequence may begin with any valid token.
To unify this into a single \textbf{Prefix Tree of Tokens},
we introduce a \textbf{virtual root}
representing the tokenization of the empty string $\varepsilon$,
which serves as the shared root for all initial tokens.

While we visualize the space of all prefix tokenizations as this tree,
strictly maintaining the explicit structure is unnecessary.
Instead, the tokenization for any prefix
can be retrieved solely by \textit{backtracking} using $\theta(\cdot)$
until the beginning of the string (the virtual root) is reached.

\paragraph{Incremental Objective.}
Consider an incremental scenario where a character $c$ is appended to the current string $s$,
resulting in $s' = s c$.
To update the tokenization, we only need to find the value of $\theta(s')$.
The objective is to identify $\theta(s')$
within the \textit{intersection} of the set of all suffixes of $s'$ and the vocabulary $V$.
In the following sections,
we introduce the \textbf{Successor Forest} and other structures to efficiently solve this search problem.

\subsection{Normalization and Token Hierarchy}
\label{sec:normalization}

\newcommand{\Voc}{\mathcal{V}}
\newcommand{\Dict}{\mathcal{D}}
\newcommand{\pre}{\operatorname{pre}}
\newcommand{\suc}{\operatorname{suc}}

In practice, BPE vocabularies often contain unreachable entries.
To facilitate efficient updates, we define a \textbf{normalized representation}.
We say a token $t \in V$ is \textbf{canonical} if $\Tok_D(t) = [t]$.
The \textbf{normalized vocabulary} $\Voc$ consists solely of canonical tokens.
Crucially, the determinism of BPE establishes a \textbf{bijective mapping}
between non-atomic canonical tokens and the specific merge rules that produce them,
which we term \textbf{canonical rules}.
The \textbf{normalized dictionary} $\Dict$ contains strictly these rules.
We show in Appendix~\ref{app:normalization_details}
that this normalization preserves exact compatibility, i.e., $\Tok_\Dict(s) \equiv \Tok_D(s)$.

\paragraph{Predecessors and Successors.}
Based on the bijection above,
for every non-atomic $t \in \Voc$ produced by the canonical rule $(x, y)$ of $t$,
we define its \textbf{predecessor} $\pre(t) = x$ and \textbf{successor} $\suc(t) = y$.
As detailed in Appendix~\ref{app:normalization_details},
the determinism of BPE ensures $x, y \in \Voc$.
Thus, the vocabulary is closed under these operations,
forming the topological basis for the Successor Forest.

\newcommand{\SucForest}{\mathcal{F}_{\suc}}
\newcommand{\SufSucTree}{\operatorname{SufSucTree}}

\subsection{Successor Forest}

\newcommand{\colorbadge}[2]{%
  \begingroup%
  \setlength{\fboxsep}{0pt}%
  \smash{\colorbox{#1}{\strut #2}}%
  \endgroup%
}
\begin{figure*}[t]
  \centering
  \includegraphics[width=\linewidth]{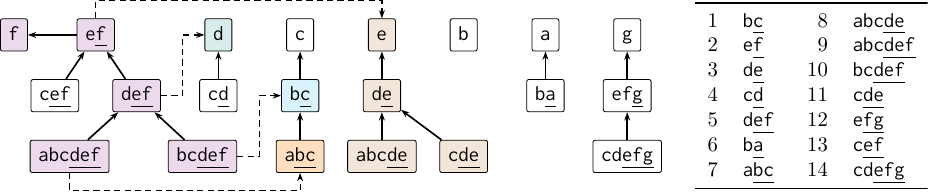}
  \caption{\textbf{Successor Forest and the merge rules.}
    Solid edges denote the Successor Forest, while dashed edges indicate predecessors.
    \colorbadge{violet!15}{Violet} nodes highlight $\SufSucTree(\texttt{abcdef})$.
    Other colored nodes, formed by the predecessors and some of their subtrees,
    identify the tokens $\theta(s^{-\suc(t)})$
    satisfying the \textbf{Prefix Last-Token Condition} (Definition~\ref{def:prefix-last-token})
    for the tokens $t$:
    \colorbadge{brown!20}{\texttt{ef}},
    \colorbadge{teal!15}{\texttt{def}},
    \colorbadge{cyan!15}{\texttt{bcdef}},
    and \colorbadge{orange!25}{\texttt{abcdef}}.
  }
  \label{fig:suc-forest-with-rules}
\end{figure*}

Based on the hierarchy,
we construct the topological structure for searching $\theta(\cdot)$.
We define the \textbf{Successor Forest} ($\SucForest$) as a directed graph $G = (\Voc, E)$
where edges point from a token to its successor:
$E = \{ (u, \suc(u)) \mid u \in \Voc, |u| > 1 \}$.
Since edges strictly reduce token length ($|\suc(u)| < |u|$),
$G$ is acyclic and forms a forest of trees rooted at atomic tokens.

An example Successor Forest is illustrated in \cref{fig:suc-forest-with-rules}.

\subsection{Suffix-Successor Tree}

By definition, for any non-atomic token $u$, $u = \pre(u) \suc(u)$,
which implies $\suc(u)$ is a proper suffix of $u$.
This property ensures that
traversing parents in $\SucForest$ corresponds to taking proper suffixes.

For a given token $t \in \Voc$,
we define the \textbf{Suffix-Successor Tree}, denoted as $\SufSucTree(t)$,
as the subgraph of $\SucForest$ induced by the set of all canonical tokens that are suffixes of $t$.

Let $S_t$ be the set of suffix tokens of $t$ (note that $t \in S_t$).
If a non-atomic token $u \in S_t$, then its parent in the forest, $\suc(u)$,
must also be a suffix of $t$, implying $\suc(u) \in S_t$.
Since all suffixes of $t$ share the same last character,
all paths in this subgraph converge to the unique atomic suffix token of $t$.
Therefore, $\SufSucTree(t)$ forms a single tree rooted at this atomic token.

This structure organizes all potential candidates for the value of $\theta(\cdot)$.
By identifying further properties on this tree,
we can reformulate the incremental tokenization as an efficient tree search problem.

An example Suffix-Successor Tree is highlighted in violet in \cref{fig:suc-forest-with-rules}.

\section{The Monotonic Path Property}

This section establishes the key property underlying BPE for suffix tokens.
For any input string, we will show that
the suffix tokens capable of serving as the final token
form a single monotonic path in the Suffix-Successor Tree.

\subsection{Intuition}
\label{sec:intuition}

Fix a non-empty string $s$ and a suffix token $t$ of $s$.
For $t$ to be the valid last token $\theta(s)$,
the boundaries around its components $\pre(t)$ and $\suc(t)$
are required to survive the merge process before applying the canonical rule of $t$
(as formalized in
Lemma~\ref{lem:ancestor-last-token}
and Lemma~\ref{lem:exclusive-priority}).

This survival establishes a necessary dual constraint on the merge history:
\begin{enumerate}
  \item \textbf{Structural Reachability}:
    $\pre(t)$ must appear as the rightmost token at some stage
    during the processing of the prefix,
    meaning the eventual last token of the prefix must be a descendant of $\pre(t)$.
  \item \textbf{Procedural Priority}:
    The merge rule forming $t$ must execute \textit{before} any rule
    that would otherwise consume $\pre(t)$
    (specifically, merging it with a left neighbor).
\end{enumerate}

Crucially, these conditions are necessary but not sufficient,
since the newly formed token $t$ may still be involved in subsequent merges
before the final tokenization is reached.

Another intuition is that
the rightmost token can be viewed as evolving through the tokenization:
starting from an atomic token,
it is progressively merged with its adjacent token.
This process can be viewed as traversing a descendant path
in the Successor Forest,
until reaching the last token of the final tokenization.

We provide a detailed derivation of this intuition
based on the ``boundary elimination'' logic in Appendix~\ref{app:extended-intuition}.

\subsection{Formalization}

We now formalize the intuition from the previous subsection.
The following definition establishes a rigorous criterion for determining
whether a candidate suffix token $t$ is compatible with the tokenization of the remaining prefix.

\begin{definition}[Prefix Last-Token Condition]
\label{def:prefix-last-token}
Fix a non-empty string $s$ and a suffix token $t$ of $s$.

If $t$ is an \textbf{atomic token}, it satisfies the \textbf{Prefix Last-Token Condition} by definition.

If $t$ is a \textbf{non-atomic token}, let $s^{-\suc(t)}$ denote the prefix obtained
by removing the suffix $\suc(t)$ from $s$.
We say that $t$ satisfies the condition with respect to $s$ if the following hold:
\begin{enumerate}
  \item \textbf{Reachability.}
    The token $\theta(s^{-\suc(t)})$ lies in the subtree of the Successor Forest rooted at $\pre(t)$
    (including the case $\theta(s^{-\suc(t)}) = \pre(t)$).

  \item \textbf{Priority dominance.}
    If $\theta(s^{-\suc(t)}) \neq \pre(t)$,
    let $u$ be the unique child of $\pre(t)$
    that lies on the ancestor path from $\theta(s^{-\suc(t)})$ to $\pre(t)$.
    Then the canonical rule of $t$
    has strictly higher priority than that of $u$.
\end{enumerate}
\end{definition}

We now assert the main structural theorem of this paper.

\begin{theorem}[Monotonic Path Property]
\label{thm:monotonic-path}
Let $k$ be the longest canonical suffix token of a string $s$.
Consider the Suffix-Successor Tree $\SufSucTree(k)$.
Let $P$ denote the unique path from the last token $\theta(s)$ to the root.
Then a node $t \in \SufSucTree(k)$ satisfies the Prefix Last-Token Condition
if and only if $t$ lies on the unique path $P$.
\end{theorem}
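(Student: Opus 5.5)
I would prove the statement by analysing how the rightmost token evolves during the merge process on $s$. The picture is the one sketched in \cref{sec:intuition}. In $\varphi_0$ the rightmost token is the atomic last character of $s$. Each later change to it happens when some rule $(x,y)$ fires with $y$ equal to the current rightmost token, so the new rightmost token $z=xy$ satisfies $\suc(z)=y$. The successive rightmost tokens therefore form a descendant chain $a=t_0,t_1,\dots,t_m=\theta(s)$ in $\SucForest$. Every $t_i$ is a suffix of $s$. Each $t_i$ is canonical, because rightmost tokens of intermediate sequences are produced by canonical rules (\cref{sec:normalization}). Hence each $t_i$ has length at most $|k|$, is a suffix of $k$, and the chain is exactly the path $P$ in $\SufSucTree(k)$. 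The plan is to show two things: each $t_i$ satisfies the Prefix Last-Token Condition, and every node of $\SufSucTree(k)$ off $P$ fails it.

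For the direction ``on $P$ $\Rightarrow$ condition'', take $t=t_i$ with $i\ge1$, and split the merge history at the moment the rule $r$ of $t$ fires on the rightmost pair. Just before that moment the sequence ends with $[\pre(t),\suc(t)]$. The left part is the result of applying rules $r_1,\dots,r_{j-1}$ to the characters of $s^{-\suc(t)}$, since no merge has yet crossed the boundary (boundary elimination). Its rightmost token is $\pre(t)$.

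Continuing the BPE process on $s^{-\suc(t)}$ alone yields $\theta(s^{-\suc(t)})$. That process runs in isolation and may have its own rightmost chain, and the chain starting from $\pre(t)$ moves only downward. So $\theta(s^{-\suc(t)})$ lies in the subtree of $\pre(t)$, which is \textbf{Reachability}.

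For \textbf{Priority dominance}, suppose $\theta(s^{-\suc(t)})\ne\pre(t)$. Then on the isolated prefix the first merge consuming $\pre(t)$ as right element is the rule of $u$, and it must come at or after stage $j$. In the isolated prefix, $\pre(t)$ is rightmost, so every rule with index $<j$ that could consume it was already unavailable. The rule of $u$ applies to $\pre(t)$ as a right operand, and if its index were $<j$ it would have fired on the joint string too, because the left context is identical. So its priority is lower than that of $r$, and equality is impossible since the rules are distinct. I will use Prefix Consistency (\cref{lem:prefix-consistency}) together with determinism of the per-rule passes to justify that the left context evolves identically in both runs up to stage $j$. I will stage this as two auxiliary lemmas, the ancestor-last-token lemma and the exclusive-priority lemma.

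For the converse, let $t$ be a non-atomic node satisfying the condition. I would show by induction on the length of $t$ that $t\in P$. First, $\suc(t)$ also satisfies the condition: for an atomic token this is trivial, and otherwise it needs a lemma. Then, by induction, $\suc(t)=t_{i}$ for some $i$. Next, run the joint process. The left part evolves exactly as the isolated run on $s^{-\suc(t)}$ until the rightmost pair is merged across the boundary. By Reachability, $\pre(t)$ occurs as the rightmost left token at some stage. By Priority dominance, $r$ fires before the rule of $u$ would consume $\pre(t)$. I also need $\suc(t)$ to be the current right token at that moment, and here the order in which $\suc(t)$ is formed versus when $r$ is applied must be compared. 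This gives $t=t_{i+1}$.

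I expect this converse, and in particular the claim that $\suc(t)$ inherits the condition and is already formed when $r$ fires, to be the main obstacle. My first attempt would compare rule indices along the chain, using that a rule producing $t$ must have lower priority than the rule producing $\suc(t)$ (canonicity of $t$). Intermediate rightmost right-tokens compete with $r$ only through the rules along $P$, so the priorities should interleave monotonically.
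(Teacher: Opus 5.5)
\textbf{Forward direction.} Your ``on $P$ $\Rightarrow$ condition'' direction is sound. It is more direct than the paper's route, which proves the condition only for $\theta(s)$ (Claim~3) and then pushes it up the path via upward closure (Claim~1). Each $t_i$ on your rightmost chain is produced by its canonical rule acting on the final pair, so the Claim~3 argument applies to every $t_i$ directly.

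One step needs repair. ``Equality is impossible since the rules are distinct'' fails when $\pre(t)=\suc(t)=x$: the child $u$ of $x$ on the path to $\theta(s^{-\suc(t)})$ can then be $t=xx$ itself, so the two rules coincide. The tie must be excluded by the left-to-right scan. In the pass of $(x,x)$, the joint run pairs the final $[x,x]$, whereas the isolated run would have to pair that same left $x$ with its own left neighbour. The scan is identical on the shared prefix, so both cannot happen. This is the paper's Step~2 in Claims~1 and~3.

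\textbf{The gap is in the converse.} You leave two things open: that $\suc(t)$ inherits the condition, and that $\suc(t)$ is \emph{still} the last token when $r$ fires. That $\suc(t)$ is already formed is easy, by canonicity. That it has not yet been consumed is the hard part, and the proposed fix of comparing rule priorities along the chain cannot close it.

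Write $r_x$ for the canonical rule of $x$ and $>$ for higher priority. Let $\suc(t)=t_i\in P$, and suppose $t_i$ is consumed first by $r'=r_{t_{i+1}}$ with $t_{i+1}\ne t$. Put $p=\pre(t_{i+1})$. Both $p$ and $\pre(t)$ lie on $P(\theta(s^{-t_i}))$.
\begin{itemize}
\item If $p$ lies below $\pre(t)$, priorities suffice: $r>r_u\ge r_p>r'$.
\item If $p$ is a proper ancestor of $\pre(t)$, let $c$ be its child towards $\pre(t)$. Then $r'>r_c\ge r_{\pre(t)}>r$. Here the first inequality is Priority dominance for $t_{i+1}$, the second is monotonicity along the path, and the third is canonicity of $t$.
\end{itemize}
In the second case every inequality of the kind you list is consistent, so the interleaving argument stalls exactly there.

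What excludes it is structural. At pass $r'$ the left part still coincides with the isolated run on $s^{-t_i}$, in which $\pre(t)$ has not yet formed. By Reachability, $\pre(t)$ does become the last token there later, so the boundary at the left end of the suffix $t$ is still alive at pass $r'$. Truncating at that boundary gives the stage-$r'$ tokenization of the string $t$ itself. In it, $r'$ erases the $\pre(t)\,|\,\suc(t)$ boundary before pass $r$, contradicting $\Tok(t)=[t]$ with final merge $r$.

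In general form this is Lemma~\ref{lem:exclusive-priority}: for \emph{every} $t$ satisfying the condition, not only those on $P$, neither boundary of $\pre(t)$ is erased before pass $r$, and the sequence then ends with $[\pre(t),\suc(t)]$. The paper derives upward closure (Claim~1) and sibling uniqueness (Claim~2) from it, and adds Claim~4 for children of $\theta(s)$.

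With this lemma your induction becomes unnecessary. The rule $r$ then actually merges the final pair, since the tie $\pre(t)=\suc(t)$ would force $u=t$ and violate strict Priority dominance. So $t$ is a last token at some stage and lies on $P$ by Lemma~\ref{lem:ancestor-last-token}. You invoked the exclusive-priority lemma only for the forward direction, where it is not needed because the final pair is given by construction. It is the converse that cannot be done without it.
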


We provide the complete proof in Appendix~\ref{app:proof-monotonic}.
The ``if'' direction follows directly from the intuition above:
along the successor chain of the true last token,
both reachability and priority dominance are preserved monotonically.

It is notable that,
although the Monotonic Path Property restricts the search space
to the Suffix-Successor Tree of the longest suffix token,
\textit{the Prefix Last-Token Condition itself is evaluated in the global Successor Forest}.
In particular, for a candidate token $t$,
the condition depends only on the relative position
between $\pre(t)$ and $\theta(s^{-\suc(t)})$ in the Successor Forest,
and does not depend on the membership of $\pre(t)$ in $\SufSucTree(t)$.

\subsection{Linearization via DFS}
\label{sec:linearization}

To utilize the Monotonic Path Property to find the last token $\theta(s)$,
it requires an efficient method to verify whether a node $t \in \SufSucTree(k)$ satisfies
the Prefix Last-Token Condition (Definition~\ref{def:prefix-last-token}).

For a fixed non-atomic token $t$,
consider the possible values of $\theta(s^{-\suc(t)})$ that satisfy the condition.
This set of valid tokens forms the subtree rooted at $\pre(t)$ in the Successor Forest
dictated by the ``Reachability'' requirement,
excluding the subtrees rooted at any direct child of $\pre(t)$
whose canonical rule has a priority greater than or equal to that of $t$,
as enforced by the ``Priority dominance''.

Let $\mathcal{C}_t$ be the set described above for a non-atomic token $t$.
If $t$ is an atomic token, $\mathcal{C}_t$ is simply defined as an empty set.
In this section, we show that
membership in $\mathcal{C}_t$ can be evaluated in $\mathcal{O}(1)$ time
with DFS linearization.

\newcommand{\dfsIn}{\operatorname{dfs\_in}}
\newcommand{\dfsOut}{\operatorname{dfs\_out}}

\paragraph{DFS Linearization of the Successor Forest.}
To linearize the forest topology,
we perform a pre-order Depth-First Search (DFS) starting from the atomic tokens.
We maintain a global counter that increments exactly once upon entering each node $u$,
assigning it a unique entry timestamp $\dfsIn(u)$.
We then assign an exit timestamp $\dfsOut(u)$ immediately after fully exploring its subtree.

This process establishes a bijection between the nodes and the timestamps.
Crucially, it guarantees that the subtree rooted at any node $u$
corresponds exactly to a contiguous half-open interval $[\dfsIn(u), \dfsOut(u))$.

Additionally, when traversing the children of a node $u$,
we visit them in strictly increasing order of their canonical rule priorities
(\textbf{from lowest to highest priority}).
This deliberate ordering ensures that
children with lower priorities are mapped to earlier timestamps,
enabling us to efficiently exclude the subtrees of higher-priority children
to enforce the ``Priority dominance'' condition.

\paragraph{Valid Interval.}
Notably, the set $\mathcal{C}_t$ corresponds to a contiguous range of timestamps.
For each non-atomic canonical token $t$,
we precompute a \emph{valid interval} $I_t = [L_t, R_t)$
such that a node $x \in \mathcal{C}_t$ if and only if $\dfsIn(x) \in I_t$.
Concretely:
\begin{itemize}
  \item $L_t = \dfsIn(\pre(t))$;
  \item $R_t = \dfsIn(u)$, where $u$ is the first child of $\pre(t)$
    (in the DFS traversal order)
    whose canonical rule priority is greater than or equal to that of $t$.
    If no such child exists, we set $R_t = \dfsOut(\pre(t))$.
\end{itemize}

The construction above reduces the evaluation of the Prefix Last-Token Condition
to an $\mathcal{O}(1)$ interval membership test.

\paragraph{Mutual Exclusion among Siblings.}
Finally, it is notable that
the valid intervals of sibling nodes in the Suffix-Successor Tree are necessarily disjoint.
This is a direct corollary of the Monotonic Path Property (\cref{thm:monotonic-path}).
If the intervals of two distinct siblings overlapped,
there would exist a prefix state for which both siblings
satisfy the Prefix Last-Token Condition,
implying the existence of valid nodes on two diverging branches.
This contradicts the uniqueness of the valid path
guaranteed by \cref{thm:monotonic-path}.

Thus, for any two distinct children $u$ and $v$ of a node,
we have $I_u \cap I_v = \emptyset$.
This mutual exclusion enables the search algorithm
to unambiguously identify the correct branch at each step.

\section{Incremental Algorithm}

Based on the theoretical foundations established in the previous sections,
we now present the incremental algorithm for maintaining the last token $\theta(s)$.
In this section, we first outline
the abstract algorithmic framework and its design objectives.
We then detail the specific data structures,
the Aho--Corasick automaton \citep{aho-corasick} and Centroid Decomposition,
employed to implement each component efficiently.

\subsection{Algorithm Framework}

The incremental update problem can be framed as follows:
given the current string $s$, its last token $\theta(s)$, and a new character $c$,
we aim to compute the new last token $\theta(sc)$.

Our strategy relies on the structural properties of the search space.
First, recall that $\theta(sc)$ must be a \textbf{suffix token} of the string $sc$.
While there may be many suffix tokens,
they are all suffixes of the \textbf{longest suffix token}, denoted as $\tau(sc)$.
Consequently, the Suffix-Successor Tree $\SufSucTree(\tau(sc))$ defines our \textbf{search space}.

Within this search space, the \textbf{Monotonic Path Property} (\cref{thm:monotonic-path})
guarantees that the set of nodes satisfying the Prefix Last-Token Condition
forms a single continuous path anchored at the root (the atomic token $c$).
Since the atomic token is always a valid candidate, this path is never empty.
The correct last token $\theta(sc)$ corresponds to the \textbf{deepest node} of this valid path.

Therefore, the algorithm proceeds in two abstract steps:
\begin{enumerate}
  \item \textbf{Search Space Identification}:
    Identify the longest suffix token $\tau(sc)$
    to determine the bounding tree $\SufSucTree(\tau(sc))$.
  \item \textbf{Path Extension Search}:
    Within $\SufSucTree(\tau(sc))$,
    search for the deepest node satisfying the Prefix Last-Token Condition.
\end{enumerate}

\newcommand{\Hist}{\mathcal{H}}

To evaluate the Prefix Last-Token Condition during the search,
the algorithm requires access to the last tokens of previous prefixes.
We abstract this as a \textbf{history interface} $\Hist(\ell)$,
which returns the last token of the prefix
obtained by removing $\ell$ characters from the end of the current string.

\subsection{Search Space Maintenance via Aho--Corasick Automaton}

To identify $\tau(sc)$,
we maintain an Aho--Corasick automaton \citep{aho-corasick} on $\Voc$.
We augment the structure by precomputing the search space entry point:
for each state $u$, the annotation is the token $u$ itself if $u \in \Voc$,
otherwise it is inherited from $u$'s suffix link.
This allows retrieving $\tau(sc)$ in $\mathcal{O}(1)$ without traversing suffix links.
We store the transition table using a persistent square-root tiling
(details in Appendix~\ref{app:sqrt-trans-table})
to ensure $\mathcal{O}(1)$ transition queries with memory efficiency.
The annotation at the new state
identifies the specific $\SufSucTree(\cdot)$ for the subsequent search.

\subsection{Efficient Search via Centroid Decomposition}

The main challenge is to efficiently
locate the deepest valid node within $\SufSucTree(\tau)$, where $\tau = \tau(sc)$.
Since the tree height can be linear in $|\tau|$, a naive top-down traversal is inefficient.
We solve this by guiding the search using \textbf{Centroid Decomposition}.

For each canonical token $\tau \in \Voc$, we precompute a \textbf{Centroid Search Tree} (CST).
The CST is a tree of height $\mathcal{O}(\log |\tau|)$,
where each node corresponds to a centroid $u$ of a weak component
in the recursive decomposition of $\SufSucTree(\tau)$.
Crucially, from the perspective of a centroid $u$,
the original tree is split into disjoint weakly connected components:
one component containing the parent of $u$ (the ``upward'' direction),
and several components each containing a child of $u$ (the ``downward'' directions).

The search for $\theta(sc)$ proceeds by traversing this CST.
At each step, let $u$ be the current centroid.
We evaluate the Prefix Last-Token Condition for $u$
by querying the history state $k = \Hist(|\suc(u)|)$
and performing the interval check $\dfsIn(k) \in I_u$.
Based on the Monotonic Path Property,
we determine the direction of the valid path's endpoint:

\begin{enumerate}
  \item \textbf{Case 1: $u$ is invalid ($\dfsIn(k) \notin I_u$).}
    Since the valid path is anchored at the tree root (the atomic token),
    an invalid node $u$ implies that
    the path does not intersect with the subtree of $\SufSucTree(\tau)$ rooted at $u$;
    i.e., the path must be inside the component of the parent.
    We transition to the CST child corresponding to
    the component containing $u$'s parent (the upward direction).

  \item \textbf{Case 2: $u$ is valid ($\dfsIn(k) \in I_u$).}
    The node $u$ lies on the valid path.
    The target $\theta(sc)$ is either $u$ itself or its descendant.
    To determine if the path extends deeper, we check the children of $u$ in the original $\SufSucTree(\tau)$.
    Recall from \cref{sec:linearization}
    that the valid intervals of siblings are disjoint.
    We can thus pre-sort the intervals
    and perform a binary search over the children to check if any child $v$ is valid.
    \begin{itemize}
      \item If a valid child $v$ exists (i.e., $\dfsIn(\Hist(|u|)) \in I_v$),
        the path continues into $v$'s subtree.
        We transition to the CST child corresponding to the component containing $v$.
      \item If no child is valid, then the path ends at $u$.
        We conclude $\theta(sc) = u$ and terminate the search.
    \end{itemize}
\end{enumerate}

\subsection{Complexity Analysis}

The worst-case time complexity per byte is dominated by the centroid search.
\begin{itemize}
  \item \textbf{Automaton State Update}: The optimized automaton transition takes $\mathcal{O}(1)$ time.
  \item \textbf{CST Traversal}: The CST height is bounded by $\mathcal{O}(\log |\tau|)$.
  \item \textbf{Decision Step}: At each CST node,
    we perform either a single interval check or a binary search over the children.
    Since the degree of a node in $\SufSucTree(\tau)$ is bounded by $|\tau|$,
    the binary search takes $\mathcal{O}(\log |\tau|)$.
\end{itemize}
Each check involves a history query $\Hist(\cdot)$,
assumed to be an $\mathcal{O}(1)$ operation.
Combining these factors, the complexity per byte is $\mathcal{O}(\log^2 |\tau|)$,
leading to an overall complexity of $\mathcal{O}(n \log^2 t)$ for the entire input,
where $t$ is the maximum token length.

\section{Eager Output}
\label{sec:eager-output}

The incremental algorithm efficiently maintains the state $\theta(s)$.
Since the sequence of last tokens
implicitly encodes the \textit{Prefix Tree of Tokens},
the full tokenization is typically recovered
by \textbf{backtracking} from $\theta(s)$ to the virtual root.

In a streaming setting, however,
incrementality alone does not guarantee streaming output.
Thus, the challenge is to identify
the \textbf{initial segment} of this backtracking chain that has become invariant,
serving as the common ancestry for any future extension of the string.

\subsection{The Active Frontier}
\label{sec:eager-output-active-frontier}

The ambiguity of the output stream arises from the uncertainty of
which current node in the Prefix Tree of Tokens will serve as the parent for future tokens.
When new characters are appended to $s$,
any newly formed last token must attach to
the last token of some prefix of $s$.

The range of such possible attachments
is bounded by the maximum suffix length queried during the incremental search,
which corresponds to the deepest state reached in the Aho--Corasick automaton.
Let $d(s)$ be the \textbf{depth} of the state reached by the string $s$ in the Aho--Corasick automaton.
Note that this depth corresponds exactly to the length of the longest suffix of $s$ recognized by the automaton.
Since any future token formed must be recognized by the automaton, its length cannot exceed $d(s)$.

Consequently, the parent of any future token
must be the last token of a prefix ending in the interval $[|s| - d(s), |s|]$.

We thus define the set of \textbf{Parental Candidates}, $\mathcal{P}$,
as the set of all last tokens that fall within this interval:
\begin{equation}
  \mathcal{P} = \{ \theta(s[1 \dots i]) \mid |s| - d(s) \le i \le |s| \}
\end{equation}
where $s[1 \dots 0]$ indicates the empty string $\varepsilon$
and $\theta(\varepsilon)$ is the virtual root of the Prefix Tree of Tokens.
The stable output is the \textbf{common ancestral path} shared by all tokens in $\mathcal{P}$.

\subsection{Maintenance Algorithm}

To efficiently identify and emit the stable tokens,
we exploit the monotonicity of the window boundaries.
Observe that the automaton transition for a new character
can increase the state depth by at most 1 (i.e., $d(sc) \le d(s) + 1$).
Therefore, the start of the window, $|s| - d(s)$, is monotonically non-decreasing.
This guarantees that candidates only ``expire'' from $\mathcal{P}$ as the window slides forward;
previously expired tokens never re-enter.

Based on this property, we implement a two-pointer algorithm
to maintain the set $\mathcal{P}$ and track the active subgraph in the Prefix Tree of Tokens.
Tokens are emitted eagerly once all active paths converge
to a single child of the virtual root.
The detailed algorithmic steps and complexity analysis
are provided in Appendix~\ref{app:eager-algo}.

While eager output enables streaming,
it introduces computational overhead due to maintaining the dynamic subgraph.
Therefore, unless otherwise specified,
the standard benchmarks presented later utilize the non-eager incremental algorithm.

\begin{figure*}[t]
  \centering
  \includegraphics[width=\linewidth]{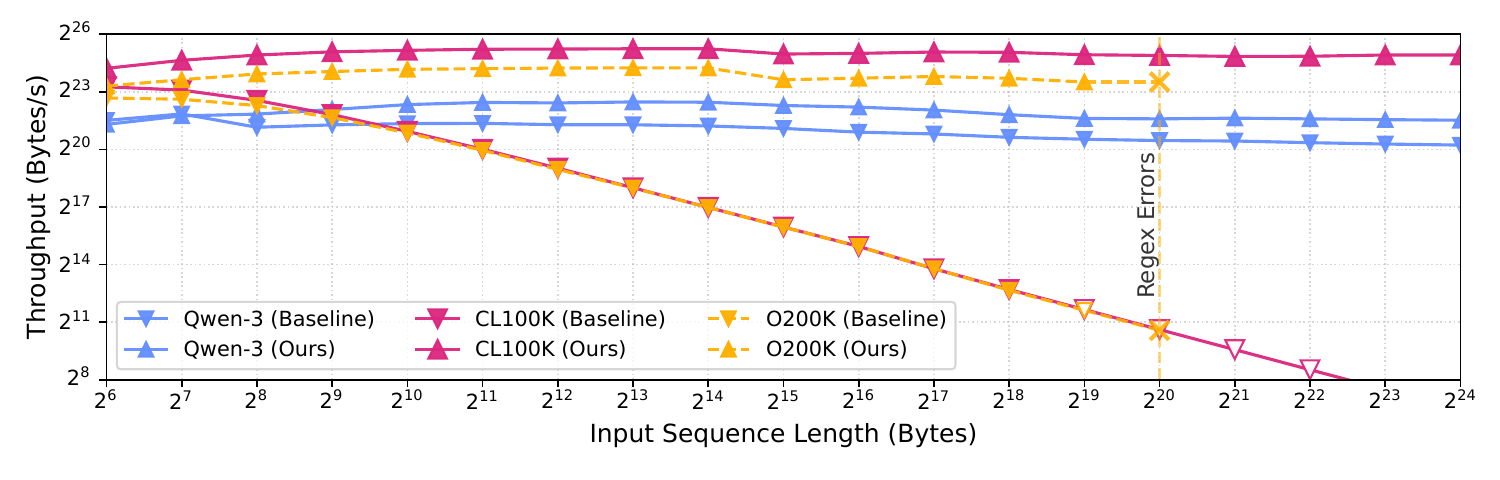}
  \caption{
    \textbf{End-to-end throughput under pathological inputs (log–log scale).}
    Filled markers represent measured results, while hollow markers indicate extrapolated values beyond the measurable range.
    The dashed \textbf{vertical} line highlights the input length at which the regular expression matching fails before BPE.
    Our approach demonstrates higher and more stable throughput compared to baseline methods.
    Notably, \texttt{tiktoken} exhibits a characteristic decay as input length increases,
    reflecting its underlying $\mathcal{O}(n^2)$ complexity.
  }
  \label{fig:plot-pathologic}
\end{figure*}

\section{Benchmarks}

We evaluate the end-to-end throughput of our incremental BPE algorithm
by integrating it as a drop-in replacement for Hugging Face's \texttt{tokenizers} and OpenAI's \texttt{tiktoken}.
Since our algorithm is fully compatible with existing pipelines,
the benchmarks isolate the effect of replacing the BPE stage alone,
without altering segmentation, normalization, or caching behavior.

Detailed experimental setup, dataset composition, and execution strategies
are provided in Appendix~\ref{app:detailed-benchmarks},
while the \textbf{full benchmark results} (including eager output)
are listed in \cref{tab:detailed-benchmarks}.

\subsection{Throughput Analysis}

\cref{tab:speedup-results}
reports the speedup factors evaluated on concatenated documents from each dataset.

\paragraph{Absence of Pre-tokenization.}
The most substantial gains are observed in \textbf{CodeLlama} (up to \textbf{3.13$\times$}),
where the BPE processes normalized text without regex-based pre-tokenization.
In this regime,
the heap-based implementation in \texttt{tokenizers}
faces log-linear complexity over the full input length,
whereas our algorithm leverages its incremental nature to maintain high throughput.

\paragraph{Coarse-Grained Segmentation.}
For the \textbf{Chinese} dataset,
regex-based pre-tokenization typically yields coarser segments than English
due to the specific designs of the regex rules and the characteristics of the language.

This exposes the performance bottlenecks in the original BPE implementation of \texttt{tiktoken},
while our method achieves significant speedups
(up to 1.59$\times$) by strictly bounding the per-byte processing time.

\paragraph{Fine-Grained Segmentation.}
For the \textbf{English} dataset,
regex rules effectively partition the input into small chunks,
where the constant factors of the implementation overhead become dominant.
This explains the slight regression of our method observed in some cases.

\begin{table}[htbp]
\caption{
  \textbf{End-to-end throughput speedup.}
  We report the speedup factor of our incremental BPE as a drop-in replacement
  for Hugging Face's \texttt{tokenizers} and OpenAI's \texttt{tiktoken}
  (using default configurations with short-string caches enabled where applicable).
  \colorbadge{gray!10}{CodeLlama} shows the highest gains due to the absence of regex-based pre-tokenization.
}
\label{tab:speedup-results}
\begin{tabularx}{\columnwidth}{ c l *{3}{>{\centering\arraybackslash}X} }
\toprule
& Tokenizer Model & English & Chinese & Code \\

\midrule
\rowcolor{gray!10}
\cellcolor{white}
\multirow{8}{*}{\rotatebox[origin=c]{90}{\small\texttt{tokenizers}}}
& CodeLlama & \textbf{3.13}$\times$ & \textbf{1.10}$\times$ & \textbf{2.88}$\times$ \\
& Qwen-3 & \textbf{1.05}$\times$ & \textbf{1.04}$\times$ & \textbf{1.08}$\times$ \\
& DeepSeek-3.2 & \textbf{1.01}$\times$ & 0.93$\times$ & \textbf{1.03}$\times$ \\
& Ouro & 1.00$\times$ & \textbf{1.02}$\times$ & \textbf{1.02}$\times$ \\
& Llama-3.1\textsuperscript{*} & 0.99$\times$ & \textbf{1.03}$\times$ & \textbf{1.02}$\times$ \\
& GPT-OSS & 1.00$\times$ & \textbf{1.08}$\times$ & \textbf{1.01}$\times$ \\
& Llama-4 & \textbf{1.01}$\times$ & \textbf{1.01}$\times$ & 1.00$\times$ \\
& Mistral-3 & 1.00$\times$ & \textbf{1.04}$\times$ & 0.99$\times$ \\

\midrule
\cellcolor{white}
\multirow{4}{*}{\rotatebox[origin=c]{90}{\small\texttt{tiktoken}}}
& P50K & 0.97$\times$ & \textbf{1.35}$\times$ & \textbf{1.07}$\times$ \\
& R50K & 0.96$\times$ & \textbf{1.35}$\times$ & \textbf{1.05}$\times$ \\
& CL100K & 0.96$\times$ & \textbf{1.59}$\times$ & \textbf{1.04}$\times$ \\
& O200K & 0.99$\times$ & \textbf{1.46}$\times$ & 1.00$\times$ \\

\bottomrule
\end{tabularx}

\begin{flushleft}
\scriptsize
\textsuperscript{*} Properized dictionary.
See Appendix~\ref{app:properizing} for discussion on compatibility and non-properizable cases (e.g., Gemma-3).
\end{flushleft}

\end{table}

\subsection{Robustness and Eager Output}

\paragraph{Pathological Inputs.}
We test robustness using inputs constructed by repeating the character `\texttt{a}' $2^k$ times.
As shown in \cref{fig:plot-pathologic}
and \cref{fig:flamegraph-pathologic},
\texttt{tiktoken} exhibits a characteristic throughput decay consistent with $\mathcal{O}(n^2)$ complexity.

Our method maintains stable throughput throughout the tests.
The other tested tokenizers within the same frameworks
exhibit similar trend characteristics to the representative models shown.
Notably, specifically for the O200K model as an exception,
long inputs trigger errors in the regex stage before BPE.

\paragraph{Eager Output Overhead.}
We also evaluate our \textit{eager output} implementation
(\cref{sec:eager-output}).
Results in Appendix~\ref{app:detailed-benchmarks} indicate that
eager output introduces an overhead on the order of 10\%
in end-to-end throughput compared to the non-eager incremental implementation.
This overhead stems from the additional bookkeeping required to maintain output states,
and can potentially be amortized in systems
where tokenization is pipelined with I/O or model inference.

\section{Future Work}

While this work demonstrates that
the BPE stage itself is fully compatible with streaming input and output,
other components in existing pipelines,
notably normalization and regex-based pre-tokenization,
remain offline-oriented and impede end-to-end streaming execution.

Our profiling results in Appendix~\ref{app:flamegraphs}
indicate that these upstream stages can become computational bottlenecks.

A critical direction for future work is therefore
to \textbf{revisit the design choices of pre-tokenization}
in light of incremental algorithms with explicit worst-case guarantees.

\section{Conclusion}

We introduced a novel algorithm for incremental BPE tokenization
with a strict \textbf{worst-case time complexity} of $\mathcal{O}(n \log^2 t)$.
Implemented as a drop-in replacement,
the algorithm delivers substantial end-to-end speedups over existing systems
and avoids the severe performance degradation on pathological inputs.
Beyond performance, our results show that BPE can be made compatible with streaming input and output,
enabling optimizations in modern language model systems.

Overall, our approach proves that
strict algorithmic worst-case guarantees can translate into practical benefits,
paving the way for more robust and responsive LLM pipelines.

\section*{Acknowledgements}

This work was supported
by the National Natural Science Foundation of China (Nos. 62525601, 62476018)
and the Postdoctoral Fellowship Program of CPSF (No. BX20250487).

\section*{Impact Statement}

This paper presents work whose goal is to advance the field of machine learning
by improving the fundamental algorithmic infrastructure of large language models (LLMs).

\paragraph{Energy Efficiency.}
Tokenization is a ubiquitous preprocessing step executed trillions of times in modern LLM pipelines.
By reducing the algorithmic worst-case time complexity of BPE to $\mathcal{O}(n \log^2 t)$,
utilizing incremental updates for streaming input,
and enabling eager emission for streaming output,
our approach reduces the computational resources and latency required for processing.
This optimization contributes to the broader goal of ``Green AI,''
potentially lowering the aggregate energy consumption
and carbon footprint associated with deploying large-scale language models.

\paragraph{System Robustness and Reliability.}
Our work specifically addresses performance degradation on pathological inputs,
a vulnerability present in some current state-of-the-art tokenizers.
By providing strict worst-case execution guarantees,
our algorithm mitigates the risk of algorithmic complexity attacks
(e.g., Denial-of-Service attacks triggered by carefully crafted input sequences),
thereby enhancing the reliability and security of production systems.

We do not foresee any specific negative societal consequences
directly stemming from this algorithmic improvement,
beyond the general dual-use nature of efficient computing infrastructure.

\bibliography{references}
\bibliographystyle{icml2026}

\newpage
\appendix
\onecolumn

\section{Properizing}
\label{app:properizing}

\subsection{Semantics Discrepancy}

As discussed by \citet{formalizing},
Byte Pair Encoding (BPE) tokenization generally follows one of two distinct operational semantics:
\begin{enumerate}
  \item \textbf{Standard BPE Semantics} (as defined in \cref{sec:bpe-definition}):
    This approach operates on a \textit{fixed priority schedule}.
    It iterates through the merge rules from highest to lowest priority,
    applying each rule exhaustively across the sequence.
    Crucially, the process is linear with respect to the rule list:
    once a priority level is passed, the algorithm never revisits it,
    regardless of whether subsequent merges create new opportunities for higher-priority rules.
  \item \textbf{SentencePiece Semantics} \citep{sentence-piece}:
    This approach operates on a \textit{dynamic priority queue}.
    At each step, it identifies the adjacent token pair
    with the \textbf{global maximum priority} in the current sequence.
    Notably, a merge operation generates at most two new adjacent pairs
    that may possess even higher priorities.
    SentencePiece will immediately execute either of the merges,
    effectively ``growing'' the token recursively
    and prioritizing these local expansions over the previous lower-priority global candidates.
\end{enumerate}

To formalize this inconsistency, we adopt the terminology established by \citet{formalizing}.
An \textit{improper dictionary} is defined as one containing merge rules
that violate the sequential assumption of the standard BPE.
Specifically, if a rule $r_{l}$ produces a token that creates the context for a strictly higher-priority rule $r_{h}$,
the standard BPE will not execute $r_{h}$, since the algorithm has strictly proceeded past its priority level.
In contrast, SentencePiece would immediately identify $r_{h}$ as the new global maximum.

To bridge this inconsistency,
we propose a method to \textbf{properize} the dictionary:
reassigning rule priorities via topological sorting
so that the standard BPE's execution yields results identical to those of SentencePiece.
The method is a sufficient construction for properization by reordering the priorities;
other constructions may exist to achieve the same goal.

\subsection{Dictionary Filtering}

In practice, BPE dictionaries often contain redundant or unreachable rules
that are never triggered, even under SentencePiece semantics.
To ensure the formalization is well-defined,
we restrict our analysis to valid rules prior to properization.

Given the deterministic nature of BPE,
we define a merge rule $(x, y) \to z$ as \textit{valid}
if and only if the tokenization of the concatenated string $xy$ under SentencePiece semantics
results in the single token $z$ itself, and the final applied merge is indeed $(x, y)$.
Rules failing this condition are effectively unreachable in any context and are thus discarded.

Crucially, this implies that the mapping from a produced token to its generating rule is unique;
no two distinct valid rules result in the same token after filtering.

In the remainder of this section,
we assume the dictionary has been filtered to contain solely valid rules.
Based on this filtered dictionary, we can now define the dependency structure.

\subsection{Growing Tree}

The SentencePiece semantics can be conceptualized
as the standard BPE mechanism equipped with additional \textbf{growing steps}.
Under this view, a merge operation $(x_0, y_0) \to z_0$ serves as a \textit{seed event}
that may trigger a cascade of immediate high-priority merges due to the greedy strategy.

Specifically, after the seed merge $(x_0, y_0)$,
the new token $z_0$ immediately forms new pairs
with its left neighbor $(u_1, z_0)$ and right neighbor $(z_0, v_1)$.
If either of these new pairs has a strictly higher priority than the original seed $(x_0, y_0)$,
SentencePiece will immediately execute the one with the higher priority.
Note that executing one side implicitly invalidates the other,
as the central token $z_0$ is consumed.

This recursive process, denoted as $(x_k, y_k) \to z_k$, repeats
until no adjacent pair has a higher priority than the original seed $(x_0, y_0)$.
The entire sequence behaves like ``growing'' outward from the initial seed merge.
Note that, since the length of the merged token $z_k$ strictly increases with each step,
no subsequent adjacent pair will possess the exact same priority as the seed.

Consider all possible scenarios of this process.
The growing path from a seed extends either to the left or to the right,
forming a chain of growing steps.
We formalize the collection of all such possible growing steps as the \textbf{Growing Tree}.
Specifically, the root is the seed rule,
and the child nodes are all valid merge rules
that can be triggered by the parent during the growing of the root,
distinguished by their growing direction (left or right).

To systematically construct these trees,
we analyze the derivation of each valid rule to determine its role (root or child).
For any string $s$, let $L(s)$ denote the lowest priority
among all merge rules involved in the tokenization of $s$ under SentencePiece semantics.
For a valid merge rule $r: (x, y) \to z$, it inherently holds that $L(z) \le \min(L(x), L(y))$.
We classify the rule $r$ based on this relationship:

\begin{enumerate}
  \item \textbf{Root Rule} ($L(z) < \min(L(x), L(y))$):
    The rule $r$ has a lower priority than any rule contained within its components.
    Thus, $r$ is a seed event that initiates a growing process.
    In the Growing Tree, $r$ is a root node.
    Functionally, we treat the root as \textit{left-growing},
    since the rule becomes applicable only after the token $y$ is generated.

  \item \textbf{Growing Step} ($L(z) = \min(L(x), L(y))$):
    The rule $r$ is triggered by a lower-priority bottleneck within its components.
    We determine the growing direction as follows:
    \begin{itemize}
      \item \textbf{Left-Growing}: If $L(z) = L(y)$, the seed lies within $y$.
        Rule $r$ extends the growing chain to the left.
        \textbf{Crucially, this case covers the scenario where $L(x) = L(y)$}.
        Under SentencePiece semantics, such ties must be resolved as left-growing;
        otherwise, the necessary right-side token $y$ would not yet be established to participate in the merge.
        In the tree, $r$ is a child of the corresponding rule in $y$.

      \item \textbf{Right-Growing}: If $L(z) = L(x)$ and $L(z) < L(y)$, the seed lies within $x$.
        Rule $r$ extends the growing chain to the right.
        In the tree, $r$ is a child of the corresponding rule in $x$.
    \end{itemize}
\end{enumerate}

Since the dictionary is filtered to ensure unique derivations,
each merge rule corresponds to at most one parent,
ensuring the dependency structure forms a valid forest.

Consequently, the growing process of a seed merge under SentencePiece semantics
is equivalent to a greedy search for the deepest applicable path
within the Growing Tree rooted at that seed.

\subsection{Dependency Graph and Topological Sort}

To simulate SentencePiece semantics using the standard BPE,
we need to flatten each Growing Tree into a valid sequential order,
by reordering the priorities of rules during growing steps.

However, the growing steps are applied independently for each seed under SentencePiece semantics,
while using the standard BPE requires each rule to be applied to the sequence exhaustively.
We model the relative order and independence of growing steps as a dependency graph.

\begin{enumerate}
  \item \textbf{Tree Edges (Parent $\to$ Child)}:
    Within each Growing Tree, a parent rule must be processed before its children.
    Although the children originally had higher priorities, in the properized dictionary,
    we must place them \textit{after} the parent to ensure the parent exists to trigger them.

  \item \textbf{Sibling Edges (Among the Direct Children)}:
    For siblings, during the growing steps, only the highest-prioritized applicable merge is selected.
    This implies that sibling rules are evaluated strictly in the order of their original priorities.
    Thus, we add edges from the siblings with higher priority to those with lower priority.

  \item \textbf{Conflict Edges (Right-Growing $\to$ Left-Growing)}:
    A critical inconsistency arises
    when two growing steps from different trees compete for the same overlapping token.
    Suppose a \textit{left-growing} rule $(w, x)$ and a \textit{right-growing} rule $(y, w)$ both require token $w$.
    \begin{itemize}
      \item Under SentencePiece semantics, each growing process runs independently, so the leftmost seed can obtain $w$ before others.
      \item To emulate this deterministically without lookahead, we rely on the left-to-right scan nature of the standard BPE.
      \item This imposes the constraint that \textbf{right-growing nodes must precede left-growing nodes}.
    \end{itemize}
    Hence, edges from the right-growing node to the left-growing node obtaining the same token are required.
\end{enumerate}

One can optimize the number of edges via transitive reduction.
This simplifies the graph without altering its topological properties.

If the constructed dependency graph is a directed acyclic graph (DAG), a topological sort exists.
Deriving the new rule priorities from this topological order yields a \textbf{properized dictionary}.
The standard BPE, executing this dictionary linearly,
will respect all dependencies of the growing steps and produce output identical to SentencePiece.

\subsection{Sufficiency}

\begin{theorem}[Sufficiency]
  For any acyclic dependency graph,
  the derived properized dictionary yields tokenization results under the standard BPE
  that are identical to those under SentencePiece semantics.
\end{theorem}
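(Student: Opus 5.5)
We argue by comparing merge \emph{histories}. Under either semantics, the tokenization of $s$ is determined by the set of boundaries that get eliminated and the rule applied at each one. So it suffices to show that both semantics eliminate the same boundaries with the same rules. We induct on the seed events of the SentencePiece run. Each SentencePiece merge is either a seed (a root rule in its Growing Tree) or a growing step (a descendant of some root along a left- or right-growing chain). The SentencePiece run on $s$ therefore decomposes into a collection of growing chains, one per seed. Each chain is a root-to-node path in a Growing Tree, and each path is the greedy deepest applicable path. The task is to show that the standard BPE with the properized priority order realizes exactly these chains.

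\textbf{Step 1: seeds occur in the same order.} Root rules keep their relative order among themselves, since the topological sort only inserts growing-step rules after their parents. A root rule $(x,y)\to z$ satisfies $L(z)<\min(L(x),L(y))$. Consequently, whenever it fires in SentencePiece, the components $x$ and $y$ were already fully built by higher-priority activity. We show by induction along the properized order that the same components are present when the standard BPE reaches that rule.

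\textbf{Step 2: each growing chain is simulated.} Fix a seed. Tree edges place every child after its parent, so each child rule is scanned after the parent token exists. Sibling edges ensure that, at each branching, the standard BPE tries the higher-original-priority sibling first. This is exactly the greedy choice SentencePiece makes among the two new adjacent pairs. Because rules are applied exhaustively left to right, a chain that extends by several steps is executed step by step as its rules appear in topological order. Chains from distinct seeds that do not share tokens proceed independently, since each rule application only touches its own adjacent pair.

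\textbf{Step 3 (main obstacle): interference between chains.} Two chains can compete for a shared token $w$: a right-growing step $(y,w)$ from a seed to the left, and a left-growing step $(w,x)$ from a seed to the right. In SentencePiece the left seed claims $w$ first. In our order the conflict edge places the right-growing rule earlier, so the standard BPE also gives $w$ to the left chain. We then need to verify that this exhausts all conflicts. For left-left or right-right pairs sharing a token, we would argue that the left-to-right exhaustive scan within a single rule reproduces SentencePiece's leftmost tie-breaking. We must also check that a chain fixed in this way never later enables a merge that SentencePiece would not perform. This follows because every growing step terminates once no adjacent pair beats the seed's priority, and all such pairs have been ordered relative to the seed.

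Acyclicity of the dependency graph is used only to guarantee that the topological order exists and respects all three edge types simultaneously. Given that order, Steps 1–3 combine by induction on the number of SentencePiece merges. This yields identical final sequences, so $\Tok$ under the properized dictionary agrees with SentencePiece on every string.
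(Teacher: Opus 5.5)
Your outline has the same shape as the paper's sketch: tree and sibling edges simulate a single growing chain, and conflict edges arbitrate between chains. However, two of its load-bearing claims are not guaranteed by the construction, and they fail.

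In Step~1 you assert that root rules keep their original relative order because only growing steps are moved. But the dependency graph has no edges between roots, and conflict edges can point \emph{into} a root, because roots are treated as left-growing. This is exactly how the cycle arises for \texttt{[(aa, a), (a, a)]}. A tree edge followed by a conflict edge can therefore force a lower-priority root ahead of a higher-priority one.

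In Step~3 you assert that the seed on the left always claims the shared token $w$. That holds only when both competing chains descend from the same root rule. This is the ``same priority'' qualifier in the paper's sketch, which you dropped. When the seed on the right has higher priority, SentencePiece fires it first and it takes $w$, whatever the positions. Your closing remark that the relevant pairs ``have been ordered relative to the seed'' rests on the same missing cross-tree constraint.

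Concretely, take atomic \texttt{p}, \texttt{q}, \texttt{w}, \texttt{x} with original priorities $(\texttt{w},\texttt{x}) > (\texttt{pq},\texttt{w}) > (\texttt{p},\texttt{q})$. All three rules are valid. $(\texttt{p},\texttt{q})$ and $(\texttt{w},\texttt{x})$ are roots, and $(\texttt{pq},\texttt{w})$ is a right-growing child of $(\texttt{p},\texttt{q})$. The graph consists of the tree edge $(\texttt{p},\texttt{q}) \to (\texttt{pq},\texttt{w})$ and the conflict edge $(\texttt{pq},\texttt{w}) \to (\texttt{w},\texttt{x})$. It is acyclic with a unique topological order, and that order reverses the two roots. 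On \texttt{pqwx}, SentencePiece gives $[\texttt{pq}, \texttt{wx}]$, whereas standard BPE with the derived order gives $[\texttt{pqw}, \texttt{x}]$. The order $(\texttt{w},\texttt{x}), (\texttt{p},\texttt{q}), (\texttt{pq},\texttt{w})$ would have matched SentencePiece.

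So your induction cannot be completed as stated, and the construction itself needs amending. The paper's sketch passes over the same point: it justifies leftmost-wins only for same-priority seeds, yet adds conflict edges unconditionally.

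The missing ingredient is a cross-tree ordering principle. Under SentencePiece, each root rule, together with every growing cascade it triggers, finishes before any lower-priority root fires. The properized order must reflect this, for instance by placing every rule of a higher-priority Growing Tree before every rule of a lower-priority one. The leftmost-wins conflict edges are then only justified between growing steps descending from the same root.
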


\begin{proof}
The proof establishes that a topological order exists
to reconcile the execution discrepancy between the standard BPE and SentencePiece semantics.

\begin{itemize}
  \item \textbf{Execution Discrepancy}:
    The standard BPE applies each merge rule globally and exhaustively across the entire sequence.
    In contrast, SentencePiece processes seeds sequentially from left to right,
    completing all recursive growing steps for the current seed before considering the next.

  \item \textbf{Intra-tree Consistency}:
    In the absence of conflicts between seeds with the same priority,
    the \textit{Tree Edges} and \textit{Sibling Edges} are sufficient to simulate SentencePiece's behavior.
    These edges ensure that the traversal of the deepest applicable path
    within each growing tree is executed in the correct dependency order.

  \item \textbf{Inter-tree Conflict Resolution}:
    When growing steps from different seeds of the same priority compete for overlapping tokens,
    SentencePiece inherently prioritizes the leftmost seed.
    To simulate this without lookahead,
    the \textit{Conflict Edges} ensure that right-growing steps precede left-growing steps,
    effectively granting priority to the leftmost seeds within the global scan.
\end{itemize}

If the resulting dependency graph is a DAG,
the topological sort provides a static priority order
that ensures the standard BPE's global execution results in identical outputs.

Since \textit{Tree Edges} and \textit{Sibling Edges} define a directed forest,
they are inherently acyclic.
A cycle would only indicate a conflict between growing processes
that cannot be resolved via static priority reordering.
\end{proof}

\subsection{Non-properizable scenarios}
\label{app:non-properizable}

If the graph contains a cycle, the dictionary is \textbf{non-properizable}.

A canonical example is the dictionary \texttt{[(aa, a), (a, a)]} applied to the string ``\texttt{aaaa}''.
\begin{itemize}
  \item \textbf{SentencePiece}: yields \texttt{[aaa, a]}.
  \item \textbf{The standard BPE}: yields \texttt{[aa, aa]},
    no matter the priorities of the merge rules.
\end{itemize}
In the dependency graph, \texttt{(a, a)} acts as a root (left-growing) pointing to a right-growing node \texttt{(aa, a)}.
However, the \textit{Conflict Edges} would cause a dependency cycle, so it fails to properize.

In our experiments, one non-properizable tokenizer we found is Gemma-3 \citep{gemma-3-repo},
which contains improper merge rules of multiple newline characters.

\section{Proof of the Prefix Consistency}
\label{app:prefix-consistency}

In this section, we provide the formal proof for Lemma~\ref{lem:prefix-consistency}.
Our proof relies on the inductive properties of the single merge operation
visualized in \cref{fig:prefix-consistency}.

\paragraph{Notation.}
Let $\Tok_{(u, v)}$ denote the operator for a single BPE merge step
that replaces all adjacent occurrences of tokens $(u, v)$ from left to right with the merged token $w$.
Initially, the input string $s$ is mapped to a sequence of atomic tokens $\varphi_0$.
The full BPE process is a composition of such operators:
$\Tok = \Tok_{r_m} \circ \dots \circ \Tok_{r_1}$, where $r_1, \dots, r_m$ are the merge rules applied in order.

\begin{proof}
We prove the lemma by induction on the number of merge steps.

\textbf{Base Case (0 merges)}:
The initial token sequence $\varphi_0$ consists of atomic tokens (bytes/characters).
The boundaries are fixed at every character position.
Trivially, any prefix of the atomic token sequence corresponds to the prefix of the raw text.

\textbf{Inductive Step}:
Assume the consistency property holds for the state $\varphi_k$ after $k$ merges.
Consider the $(k+1)$-th merge rule $r = (u, v) \to w$.
Let the state transition be $\varphi_{k+1} = \Tok_r(\varphi_k)$.
Let $\mu$ be any prefix of the new token sequence $\varphi_{k+1}$.

We examine the \textbf{boundary} at the end of $\mu$.
Since $\Tok_r$ is a \textbf{boundary elimination} operation,
it only removes the internal boundaries
between instances of $u$ and $v$ sequentially from left to right.
It does not shift or create boundaries.
Therefore, the boundary ending $\mu$ must have existed in the previous state $\varphi_k$.
This implies there exists a unique prefix $\eta$ in $\varphi_k$ such that $\pi(\eta) = \pi(\mu)$.

Since $\Tok_r(\eta) = \mu$ (as the merge operation is local and respects the boundary),
and $\eta$ is a valid tokenization by hypothesis,
it follows that $\mu$ is the valid tokenization of $\pi(\mu)$ after the $(k+1)$-th step.
\end{proof}

\begin{figure*}[htbp]
  \centering
  \includegraphics{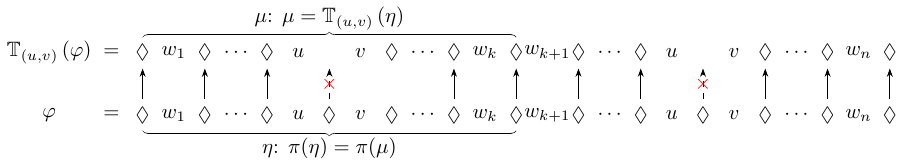}
  \caption{\textbf{Inductive step of the Prefix Consistency.}
    Solid/dashed arrows mark surviving/eliminated boundaries during a merge $(u, v)$.
    For any prefix $\mu$ in the post-merge token sequence,
    there exists exactly one valid prefix $\eta$ in the pre-merge token sequence with $\pi(\eta)=\pi(\mu)$.
    Notably, applying the single merge step to $\eta$ yields exactly $\mu = \Tok_{(u, v)}(\eta)$.
    This proves the lemma by induction over the full tokenization process.}
  \label{fig:prefix-consistency}
\end{figure*}

\section{Normalization and Equivalence Details}
\label{app:normalization_details}

In this section,
we elaborate on the properties of the normalized representation
introduced in \cref{sec:normalization}.

\paragraph{Canonical Generation and Closure.}
For a canonical token $t$ produced eventually by the rule $(x, y) \to t$,
the tokenization of the string $t$ must conclude with this specific merge.
This necessitates that the substrings $x$ and $y$
must have been independently tokenized into $x$ and $y$ prior to this final merge.
Therefore, $\Tok(x) = [x]$ and $\Tok(y) = [y]$,
implying that the predecessor $\pre(t)=x$ and successor $\suc(t)=y$
are themselves canonical tokens.
This ensures that $\Voc$ is closed under decomposition,
which is essential for the construction of the Successor Forest.

\paragraph{Equivalence.}
Normalization does not alter tokenization results for general text.
Since non-canonical rules are by definition never triggered
in the deterministic BPE tokenization of valid tokens,
removing them from the dictionary does not affect the merge process for any input string.
Thus, $\Tok_\Dict(s) \equiv \Tok_D(s)$.

\section{Extended Intuition of Monotonic Path: Boundary Elimination}
\label{app:extended-intuition}

In this section, we expand on the intuition provided in \cref{sec:intuition}
by reasoning about the merge process of the rightmost token.

Fix a non-empty string $s$ and a suffix token $t$ of $s$.
Let $s^{-\suc(t)}$ denote the shorter prefix obtained by removing the suffix $\suc(t)$ from $s$.
If $t$ is ever produced as the last token during the merge process,
then immediately before applying the canonical rule of $t$,
the tokens $\pre(t)$ and $\suc(t)$ must appear consecutively
at the end of the current token sequence.

Crucially, at this specific moment,
the boundary between $\pre(t)$ and $\suc(t)$ remains intact,
so the presence of the rightmost token $\suc(t)$ does not affect any merges
that have already occurred to the left of $\pre(t)$.
Therefore, the token sequence obtained by removing the final $\suc(t)$
coincides with the token sequence produced by the merge process of $s^{-\suc(t)}$
right before applying the rule of $t$.

This implies a necessary condition: if $t$ is to be formed,
$\pre(t)$ must appear as the rightmost token at some stage
during the merge process of $s^{-\suc(t)}$.
In terms of the Successor Forest structure,
this means that the eventual last token $\theta(s^{-\suc(t)})$
must be a descendant of $\pre(t)$ (or $\pre(t)$ itself).

Additionally, whether this state is reachable depends on rule priorities.
If $\pre(t)$ is not yet the final result $\theta(s^{-\suc(t)})$,
it is destined to be merged with a left neighbor to form a larger token
(specifically, one of its children in the Successor Forest, formed by merging it with its left neighbor).
For $t$ to exist, the canonical rule of $t$ must be applied
\textit{before} this specific merge event occurs.

In summary, the formation of $t$ is determined by a single critical moment:
whether $\pre(t)$ can persist as the rightmost token of $s^{-\suc(t)}$
until the canonical rule of $t$ is applied.

\section{Proof of Monotonic Path Property}
\label{app:proof-monotonic}

In this section, we prove Theorem~\ref{thm:monotonic-path},
which characterizes the structure of suffix tokens
that satisfy the Prefix Last-Token Condition
(Definition~\ref{def:prefix-last-token}).

Recall that for a non-empty string $s$,
let $k$ be the longest canonical suffix token of $s$,
and let $\SufSucTree(k)$ denote the Suffix-Successor Tree
induced by all canonical suffix tokens of $k$.
The theorem asserts that
the set of tokens in $\SufSucTree(k)$
satisfying the Prefix Last-Token Condition
forms exactly the unique path
from the true last token $\theta(s)$
to the root of $\SufSucTree(k)$.

\subsection{Proof Overview}

As discussed in \cref{sec:intuition} and Appendix~\ref{app:extended-intuition},
the Prefix Last-Token Condition (Definition~\ref{def:prefix-last-token})
can be intuitively understood as the ability of a node $w$
to \textit{``steal''} its predecessor $\pre(w)$
from the last token $\theta(s^{-\suc(w)})$ of the truncated string.

Claim~1 establishes an \textit{upward-closure} property:
if a node has this stealing capability,
then its successor must also have it.
Otherwise, the canonicity of the involved tokens would be violated.

Claim~2 addresses the question of uniqueness:
whether multiple children of the same successor
could simultaneously satisfy the condition.
The determinism of the standard BPE merge process,
together with strict priority ordering,
rules out this possibility.

Claim~3 confirms that the true last token $\theta(s)$
indeed satisfies the condition.
Intuitively, since $\theta(s)$ is produced by the final merge,
it necessarily has the ability to steal $\pre(\theta(s))$.

Finally, Claim~4 shows that the answer node $\theta(s)$
cannot be further extended:
if it could steal once more,
it would not be the last token of the final tokenization of the string.

With these claims, the Monotonic Path Property follows.

\subsection{Detailed Proof}

We begin by introducing auxiliary notation
and several structural lemmas
that will be used throughout the proof.

\paragraph{Ancestor Path.}
For any canonical token $x \in \Voc$,
we define $P(x)$ to be the unique path
from $x$ to the root of the Successor Forest $\SucForest$,
following successor edges.
Equivalently,
$P(x)$ consists of $x$ and all of its ancestors
in the Successor Forest.

Since the Successor Forest is acyclic
and every non-atomic token has a unique successor,
$P(x)$ is well-defined and totally ordered by ancestry.

\paragraph{Rule Priority Convention.}
Throughout the proof,
we compare merge rules by their execution order in the dictionary:
a rule $r_i$ has higher priority than $r_j$ (i.e., $r_i > r_j$)
if and only if $r_i$ is applied earlier in the standard BPE merge process.

\begin{lemma}[Priority Monotonicity Along Successor Paths]
Let $x, y \in \Voc$ be canonical tokens
such that $y$ is a descendant of $x$
in the Successor Forest (i.e., $x \in P(y)$).
If the token $x$ is non-atomic,
then the canonical merge rule producing the token $x$
has strictly higher priority
than the canonical merge rule producing the token $y$.
\end{lemma}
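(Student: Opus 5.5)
The argument reduces to the case of a single successor edge and then chains these edges along $P(y)$. If $y = x$ there is nothing to prove; the statement is meant for a proper descendant, so assume $y \neq x$. Write the path from $y$ up to $x$ as $y = z_0, z_1, \dots, z_m = x$ with $z_{i+1} = \suc(z_i)$ and $m \ge 1$. Since lengths strictly decrease along successor edges and $x$ is non-atomic, every $z_i$ is non-atomic and has a canonical rule. It therefore suffices to show the one-step claim: for every non-atomic canonical $z$ whose successor $w = \suc(z)$ is also non-atomic, the rule of $w$ has strictly higher priority than the rule of $z$. Transitivity of the strict priority order then gives the lemma.

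For the one-step claim, use canonicity of $z$: $\Tok(z) = [z]$, and the final merge in the tokenization of the string $z$ is its canonical rule $(\pre(z), w)$. This is exactly the bijection from \cref{sec:normalization} and the closure argument in Appendix~\ref{app:normalization_details}. Immediately before that rule is applied, the current token sequence of the string $z$ is $[\pre(z), w]$. So the token $w$ already exists as a token at that stage, occupying the suffix of $z$ of length $|w|$.

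Because $w$ is non-atomic, its characters started out as separate atomic tokens in $\varphi_0$. Some merge therefore produced the token $w$ at that position. The only merge that can output the token $w$ is a rule with result $w$. After normalization this is unique, namely the canonical rule of $w$; every rule $(a,b)\to w$ with $ab = w$ coincides with it under the filtered bijection.

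Under the standard semantics, rules are applied in priority order, each exhaustively and never revisited. The production of $w$ therefore occurred at a strictly earlier stage than the application of $z$'s rule. The two rules are distinct because they produce tokens of different lengths. Hence the rule of $w$ has strictly higher priority.

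The main obstacle is making the sentence ``the token $w$ present at that moment was produced by the canonical rule of $w$'' rigorous. The danger is that the substring $w$ inside $z$ might have been assembled by a different rule with the same output string. I would settle this using the normalized dictionary $\Dict$, in which each canonical token has exactly one producing rule and $\Tok_\Dict \equiv \Tok_D$. I would also use Prefix Consistency (Lemma~\ref{lem:prefix-consistency}) applied to the suffix part, noting that by the boundary-elimination view, the merges inside the block $w$ occurring before $z$'s rule behave as in $\Tok(w) = [w]$. Together these pin down the last merge inside the block as $w$'s canonical rule.
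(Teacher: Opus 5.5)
Your proof is correct and rests on the same idea as the paper's. When the descendant's canonical rule fires, the ancestor already exists as a token, so its rule fired earlier in the fixed priority schedule; your edge-by-edge chaining and your use of $\Dict$ to identify the rule that produced $w$ just make rigorous what the paper's one-line contradiction argument leaves implicit. One small correction: for $y = x$ the strict claim is false rather than trivially true, so restricting to proper descendants is a necessary reading of the statement, not a degenerate case with nothing to prove.
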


\begin{proof}
By definition of the Successor Forest,
$y$ is obtained by a sequence of canonical merges
starting from $x$.
If the canonical rule producing $y$
were applied before the canonical rule producing $x$,
then $x$ would never appear as an isolated token
during the tokenization of its own string,
contradicting the assumption that $x$ is canonical.
\end{proof}

\begin{lemma}[Shared Right Boundary]
Let $x, y \in \Voc$ be canonical tokens
such that $x \in P(y)$.
Consider any tokenization process
in which the token $y$ appears as a token in the token sequence.
Immediately before the canonical rule producing the token $y$ is applied,
the token $x$ must also appear as a token,
and the right boundary of $x$
coincides with the right boundary of $y$.
\end{lemma}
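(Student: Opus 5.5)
We argue by induction on the length of the ancestor path from $y$ up to $x$ in the Successor Forest, that is, on the number of successor edges separating $y$ from $x$. In the base case $x = y$ the statement is immediate: immediately before the canonical rule of $y$ is applied, the occurrence that will become $y$ is still split into $\pre(y)$ and $\suc(y)$. So in that case we read the claim as concerning the moment at which $y$ first appears, and $x = y$ trivially shares its own right boundary. The substantive case is $x$ a proper ancestor of $y$, and there we work with the occurrence of $y$ that is about to be produced.

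For the inductive step, fix an occurrence of $y$ produced by its canonical rule $(\pre(y), \suc(y))$. Immediately before that merge, the sequence contains adjacent tokens $\pre(y)$ and $\suc(y)$, and the right boundary of $\suc(y)$ is exactly the right boundary of the resulting $y$. Since $x \in P(y)$ and $x \neq y$, we have $x \in P(\suc(y))$, with the path from $\suc(y)$ to $x$ one edge shorter. If $\suc(y) = x$ we are done. Otherwise we apply the induction hypothesis to $\suc(y)$: at the moment just before the canonical rule producing $\suc(y)$ was applied, the token $x$ was present with right boundary equal to that of $\suc(y)$. Since a merge only removes boundaries and never shifts them (the boundary-elimination view from the proof of Lemma~\ref{lem:prefix-consistency}), this right boundary persists and remains the right boundary of $y$.

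The delicate point is the word ``immediately'': the statement places $x$ in the sequence at the moment just before $y$'s rule fires, whereas induction only shows that $x$ appeared at an earlier moment with the right boundary. To close this gap, we read the claim as asserting that $x$ occurs at some stage of this tokenization process prior to $y$'s formation, with a coinciding right boundary. Under that reading, the chain of successors $y, \suc(y), \suc(\suc(y)), \dots, x$ gives a sequence of nested stages. At each stage the current token ends at the fixed right boundary, and it is transformed into the next (longer) token only by merging with a left neighbor, namely its predecessor. Because each merge in the chain absorbs a left neighbor, the right boundary is never touched. This is the main obstacle; I would resolve it by stating the invariant explicitly: along the process, the token ending at a fixed surviving boundary evolves only by left-merges, and each step moves one edge down the Successor Forest. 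The path $P(y)$ then records exactly the successive tokens ending at that boundary, so $x$ must appear among them.

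Finally, I would check consistency with Lemma on Priority Monotonicity. The rules along the path fire in order from $x$ down to $y$, which matches the temporal order produced by the induction, so the stages are well-ordered and the argument does not circularly depend on the ordering.
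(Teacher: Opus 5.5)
Your proof is correct and follows the paper's own argument: $y$ is obtained from $x$ by a chain of left-merges along the successor path, none of which touches the shared right boundary, so $x$ must have sat at that boundary at an earlier stage. Your induction on path length simply makes this chain explicit. Your reading of ``immediately before'' as ``at some stage up to the formation of $y$'' is also what the paper's proof actually establishes (it only says ``prior to''). That reading is forced, since immediately before $y$'s rule fires, the token ending at that boundary is $\suc(y)$.
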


\begin{proof}
Since $y$ is produced by successive canonical merges
starting from $x$,
all intermediate merges preserve the rightmost boundary
until the final merge forming $y$.
Thus, prior to the execution of the canonical rule of $y$,
the token $x$ must exist and share the same right endpoint.
\end{proof}

\begin{lemma}[Ancestor Characterization of Last Tokens]
\label{lem:ancestor-last-token}
For any non-empty string $s$ and any canonical token $x$,
we have
\[
x \in P(\theta(s))
\quad \Longleftrightarrow \quad
\text{$x$ appears as the last token
at some stage during the tokenization of the string $s$.}
\]
\end{lemma}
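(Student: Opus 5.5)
We will prove both directions by tracking how the rightmost token evolves during the standard BPE process on $s$. Let $\varphi_0, \varphi_1, \dots, \varphi_m$ be the token sequences after applying $\Tok_{r_1}, \dots, \Tok_{r_m}$ to the character sequence of $s$. Let $\ell_i = \operatorname{last}(\varphi_i)$, so $\ell_0$ is the atomic last character and $\ell_m = \theta(s)$.

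The key observation concerns a single operator $\Tok_r$ with $r=(x,y)\to z$. Applying it changes the rightmost token only if the last two tokens of $\varphi_{i-1}$ are $x,y$ with $y=\ell_{i-1}$ and the pair is merged. In that case $\ell_i = z$ with $\suc(z) = y = \ell_{i-1}$. Because the dictionary is normalized, every token that ever appears in the process is canonical, and its producing rule is its canonical rule. So $\ell_{i-1}$ is the parent of $\ell_i$ in $\SucForest$ whenever they differ, and the rightmost token never changes in any other way. Hence the distinct values of $\ell_0, \ell_1, \dots, \ell_m$ form a descending chain in the Successor Forest from the atomic root down to $\theta(s)$. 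Consecutive changes are single parent-to-child steps, so this chain is exactly $P(\theta(s))$ traversed from the root. That proves both directions at once: every $x$ that appears as the last token lies in $P(\theta(s))$, and every $x \in P(\theta(s))$ equals some $\ell_i$.

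To make the ``no other way'' step rigorous, we argue as follows. The boundary-elimination view used in the proof of Lemma~\ref{lem:prefix-consistency} shows that $\Tok_r$ only deletes boundaries. The right end of the string is never a deleted boundary, so the rightmost token changes exactly when its left boundary is deleted. That happens only when it merges, as the right component $y$, with its left neighbour $x$. The left-to-right scan of $\Tok_r$ could in principle let an earlier merge consume the left neighbour first. That case only prevents the change, and does not produce any other new rightmost token, so the argument is unaffected.

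The main obstacle is justifying that the merged token $z$ is canonical with canonical rule $(x,y)$, so that $\suc(z)=y$ in the forest. We will handle this with the normalization result $\Tok_\Dict\equiv\Tok_D$ from Appendix~\ref{app:normalization_details}: the process may be run with $\Dict$, whose rules are in bijection with non-atomic canonical tokens. One could also argue directly, via Lemma~\ref{lem:prefix-consistency} applied to the substring $\pi(z)$, that every token produced during any tokenization is canonical. Since every rule of $\Dict$ is the canonical rule of its output, the parent relation is immediate.
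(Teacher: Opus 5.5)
Your proposal is correct and follows essentially the same argument as the paper: the rightmost token changes only when it is merged, as the successor, with its left neighbour via a canonical rule. So the successive last tokens walk down successor edges from the atomic root to $\theta(s)$, giving both directions at once. You make explicit what the paper's two-line sketch leaves implicit, namely single parent-to-child steps and canonicity of the firing rule. One quibble: your alternative ``direct'' canonicity argument should cite the stage-wise, contiguous-subsequence version, Lemma~\ref{lem:truncation-consistency}, rather than Lemma~\ref{lem:prefix-consistency}, because the merged token $z$ need be neither in the final tokenization nor a prefix.
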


\begin{proof}
($\Rightarrow$)
If $x \in P(\theta(s))$,
then $\theta(s)$ is obtained from $x$
by a sequence of canonical merges.
Immediately before each such merge,
$x$ (or its descendant) appears as the last token.

($\Leftarrow$)
If $x$ appears as the last token at some stage,
then the last token $\theta(s)$
must be obtained by further merges starting from $x$,
implying $x \in P(\theta(s))$.
\end{proof}

\begin{lemma}[Tokenization Consistency of Token-wise Truncation]
\label{lem:truncation-consistency}
The Prefix Consistency (Lemma~\ref{lem:prefix-consistency})
extends naturally to any contiguous subsequence of the token sequence during the tokenization.
It also applies to any prefix of merge rules in a proper dictionary.

Specifically, for any integer $k \le |\Dict|$,
after applying the first $k$ merge rules to a string and obtaining the token sequence $\varphi$,
if we perform token-wise truncation on $\varphi$,
the isolated contiguous subsequence $\mu$ is exactly the valid tokenization
of its corresponding underlying string $\pi(\mu)$ with the first $k$ merge rules.
\end{lemma}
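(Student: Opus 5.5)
We prove Lemma~\ref{lem:truncation-consistency} by a single induction on the number $k$ of merge rules applied. This mirrors the boundary-elimination argument used for Lemma~\ref{lem:prefix-consistency} in Appendix~\ref{app:prefix-consistency}, strengthened in two ways: it treats arbitrary contiguous windows rather than prefixes, and it asserts the statement at every intermediate stage rather than only for the final tokenization.

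Fix a string $s$ and write $\varphi_k = (\Tok_{r_k}\circ\dots\circ\Tok_{r_1})(\varphi_0)$. For a contiguous window $\mu$ of $\varphi_k$, delimited by two boundaries $b_\ell < b_r$ of $\varphi_k$, the claim is
\[
(\Tok_{r_k}\circ\dots\circ\Tok_{r_1})(\text{chars of }\pi(\mu)) = \mu .
\]

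\textbf{Base case ($k=0$).} Both sides are the character sequence, so the claim is immediate.

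\textbf{Inductive step.} Let $r_{k+1} = (u,v)\to w$ and let $\mu$ be a window of $\varphi_{k+1}$ with end boundaries $b_\ell, b_r$.

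\begin{enumerate}
  \item Since $\Tok_{r_{k+1}}$ only deletes boundaries, $b_\ell$ and $b_r$ are already boundaries of $\varphi_k$. They therefore delimit a window $\eta$ of $\varphi_k$ with $\pi(\eta) = \pi(\mu)$.
  \item By the induction hypothesis, applying $r_1,\dots,r_k$ to $\pi(\mu)$ yields exactly $\eta$.
  \item It remains to show $\Tok_{r_{k+1}}(\eta) = \mu$, i.e.\ that the left-to-right scan for $(u,v)$ performs the same merges inside the window $\eta$ as it does inside $\varphi_k$.
\end{enumerate}

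Step~3 is the main obstacle. The scan is not purely local: its decision at a given occurrence of $(u,v)$ depends on what happened to its left, because overlapping occurrences such as $u\,v\,v$ with $u=v$ are resolved by earlier choices. We handle the two window ends separately.

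\emph{The right end.} Because $b_r$ survives in $\varphi_{k+1}$, no merge in $\varphi_k$ crosses $b_r$. The scan's choices at positions left of $b_r$ never look to the right of $b_r$, so dropping the tokens to the right of $b_r$ changes nothing inside $\eta$. This end is easy.

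\emph{The left end.} We argue that the scan's state at $b_\ell$ is the same as at the start of a fresh sequence. The scan keeps at most one pending token. Because $b_\ell$ survives, the token immediately left of $b_\ell$ was not merged across it. Hence, at $b_\ell$, either no pending $u$ is waiting to combine with the first token of $\eta$, or the pair straddling $b_\ell$ is simply not an instance of $(u,v)$. In either case, restarting the scan at $b_\ell$ produces identical choices for every pair inside $\eta$.

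Formally, we would prove by induction along the scan that the set of merged pairs within $\eta$ coincides in both runs. The case to check most carefully is $u=v$, where a chain $u\,u\,u$ is resolved greedily from the left. There, survival of $b_\ell$ means the pair straddling $b_\ell$ was skipped. It could be skipped only because its left token had already been consumed by a merge on its own left, so the first token of $\eta$ is free in both runs.

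\textbf{Conclusion.} With step~3 established, the claim holds at $k+1$, completing the induction for every $k \le |\Dict|$. The two specializations in the statement then follow directly:
\begin{itemize}
  \item taking $k = |\Dict|$ and $b_\ell$ the start of the string recovers Lemma~\ref{lem:prefix-consistency};
  \item for intermediate $k$, the claim is exactly the statement after the first $k$ rules. Properness of the dictionary is used only so that these truncated rule lists remain meaningful as standard BPE, with no reordering.
\end{itemize}
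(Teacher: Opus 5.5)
Your proposal is correct and takes essentially the paper's route: induction over individual merge steps via boundary elimination, as in the proof of Lemma~\ref{lem:prefix-consistency}, combined with the locality of the left-to-right, non-overlapping scan. Your explicit left-end argument (survival of $b_\ell$ forces the scan to arrive exactly at the first token of the window, so restarting there changes nothing) supplies a detail the paper only asserts when it appeals to the ``left-to-right, non-overlapping nature'' of BPE.
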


\begin{proof}
This follows directly from the inductive proof
of Lemma~\ref{lem:prefix-consistency},
which applies to each individual merge step.

From a more abstract perspective,
because the boundaries delimiting $\mu$ survive the first $k$ merges,
no merge operation has crossed them.
Due to the deterministic, left-to-right, non-overlapping nature of BPE tokenization,
the sequence of internal merges strictly within $\mu$ is entirely independent
of the tokens outside these boundaries.
Therefore, isolating $\mu$ via token-wise truncation yields the exact same state
as independently tokenizing the raw string $\pi(\mu)$ with the first $k$ rules.
\end{proof}

\begin{lemma}[Step-wise Consistency of Token-wise Truncation]
\label{lem:step-wise-consistency}
Let $s$ be a string and $\varphi$ be its token sequence after applying the first $k$ merge rules.
Let $\mu$ be a contiguous subsequence of $\varphi$ obtained by token-wise truncation.
Let $r$ be the $(k+1)$-th merge rule.
Applying $r$ to $\mu$ (i.e., $\Tok_r(\mu)$)
yields exactly the valid tokenization of the corresponding substring $\pi(\mu)$
with the first $k+1$ merge rules.
\end{lemma}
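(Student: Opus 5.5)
The plan is to reduce the claim to the single-step inductive argument already used for Lemma~\ref{lem:prefix-consistency}, combined with Lemma~\ref{lem:truncation-consistency}. Let $\varphi = (\Tok_{r_k}\circ\dots\circ\Tok_{r_1})(\varphi_0)$ with $\varphi_0$ the characters of $s$. Let $\mu$ be a contiguous block of $\varphi$ delimited by boundaries $b_\ell < b_r$ that survive the first $k$ merges. Lemma~\ref{lem:truncation-consistency} says that $\mu$ is exactly the tokenization of $\pi(\mu)$ under the first $k$ rules. It therefore suffices to compare two sequences: $\Tok_r(\mu)$, and the block of $\Tok_r(\varphi)$ lying between the positions of $b_\ell$ and $b_r$.

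The key steps are as follows.

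\textbf{Step 1.} Make $\Tok_r$ concrete for $r=(u,v)\to w$. It is a left-to-right scan over the sequence. At each index it checks whether the current token and the next token equal $(u,v)$. If they do, it emits $w$ and skips two positions; otherwise it emits the current token and advances by one.

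\textbf{Step 2.} Run this scan on $\varphi$. Before it reaches the first token of $\mu$, the scan is in one of two states.
\begin{itemize}
\item It lands exactly on the first token of $\mu$. The pairs it then examines inside $\mu$ are the same pairs a fresh scan of $\mu$ would examine, so it makes the same decisions as the scan started on $\mu$ alone.
\item It has just consumed the last token of the preceding block together with the first token of $\mu$ as a $(u,v)$ pair, thereby merging across $b_\ell$.
\end{itemize}
In the second case the global output differs from $\Tok_r(\mu)$ on the first token, so the claim concerns only runs in which $b_\ell$ is not crossed. The statement implicitly assumes this: the boundaries of $\mu$ are taken to delimit a block of the result, as in the truncation setting.

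\textbf{Step 3.} At the right end, the scan restricted to $\mu$ can only differ from the global scan at the final pair, which in the global scan may straddle $b_r$. Under the same assumption that $b_r$ survives, the two scans coincide on the entire block.

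The main obstacle is exactly this boundary case. I would state the lemma carefully as follows: if $b_\ell$ and $b_r$ are still boundaries after applying $r$ to $\varphi$, then the block of $\Tok_r(\varphi)$ between them equals $\Tok_r(\mu)$. Under this reading, induction on the scan index gives synchronization. The invariant is that the global scan's pointer enters $\mu$ at the first token of $\mu$, and from then on the two scans perform identical comparisons. The left-to-right, non-overlapping nature of the scan means that decisions to the left of $b_\ell$ cannot affect decisions inside $\mu$ once the pointer is aligned at $b_\ell$.

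Finally, Lemma~\ref{lem:truncation-consistency} with $k+1$ in place of $k$ identifies that block with the tokenization of $\pi(\mu)$ under the first $k+1$ rules. Combining this with Step 3 gives $\Tok_r(\mu)$ equal to that tokenization. Alternatively, one can apply the fact directly: the first-$(k+1)$-rule tokenization of $\pi(\mu)$ is, by definition, $\Tok_r$ applied to its first-$k$-rule tokenization, which by Lemma~\ref{lem:truncation-consistency} is $\mu$. This last observation alone essentially proves the statement, so the proof would lead with it and use the scan synchronization only to justify the consistency with the global sequence.
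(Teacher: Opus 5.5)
Your closing observation is exactly the paper's proof. Lemma~\ref{lem:truncation-consistency} identifies $\mu$ with the first-$k$-rule tokenization of $\pi(\mu)$, and by the definition of BPE, applying $\Tok_r$ to that sequence gives the first-$(k+1)$-rule tokenization, which settles the statement unconditionally.

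The scan-synchronization argument and the added hypothesis that $b_\ell$ and $b_r$ survive $r$ are unnecessary here. The statement makes no claim about $\Tok_r(\varphi)$, so it does not ``implicitly assume'' anything about boundary survival. That caveat only matters where the paper later treats $\Tok_r(\mu)$ as a block of the global sequence after $r$.
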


\begin{proof}
By Lemma~\ref{lem:truncation-consistency},
token-wise truncation is valid after $k$ merges.
Thus, $\mu$ is exactly the correct token sequence for the substring $\pi(\mu)$ after the first $k$ rules.

By the definition of the BPE tokenization,
the tokenization after $k+1$ rules is obtained by applying the $(k+1)$-th rule $r$
directly to the tokenization resulting from the first $k$ rules.

Therefore, $\Tok_r(\mu)$ yields the tokenization of $\pi(\mu)$ after applying the first $k+1$ merge rules.
\end{proof}

\begin{lemma}[Exclusive Priorities of Tokens satisfying the Prefix Last-Token Condition]
\label{lem:exclusive-priority}
Let $t$ be a non-atomic suffix token of a string $s$
that satisfies the Prefix Last-Token Condition (Definition~\ref{def:prefix-last-token}).
During the tokenization process of $s$,
immediately prior to executing the canonical rule of $t$,
the rightmost two tokens in the sequence are exactly $\pre(t)$ and $\suc(t)$.
\end{lemma}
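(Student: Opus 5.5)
The plan is to fix the index $k$ of the canonical rule of $t$ in $\Dict$ and study the token sequence $\varphi_{k-1}$ of $s$ after the first $k-1$ rules. Write $s' = s^{-\suc(t)}$, so that $s = s'\,\suc(t)$. The goal splits into three claims:
\begin{enumerate}
  \item the boundary at position $|s'|$ is still present in $\varphi_{k-1}$;
  \item the part of $\varphi_{k-1}$ to the right of that boundary is the single token $\suc(t)$;
  \item the token immediately to the left of that boundary is $\pre(t)$.
\end{enumerate}
Once the first claim holds, Lemma~\ref{lem:truncation-consistency} splits $\varphi_{k-1}$ into the $(k-1)$-step tokenization of $s'$ followed by the $(k-1)$-step tokenization of the string $\suc(t)$. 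The remaining two claims then reduce to statements about these two strings in isolation.

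\textbf{The right side.} Since $t$ is a descendant of $\suc(t)$, the Priority Monotonicity lemma puts the rule of $\suc(t)$ (if it is non-atomic) strictly before rule $k$. Every rule used in tokenizing the string $\suc(t)$ is at least as early as its final rule, since each intermediate token lies on a successor path or a predecessor chain ending in that final merge. Applying the same monotonicity recursively, the string $\suc(t)$ is already the single token $\suc(t)$ after $k-1$ steps. This also uses that, by canonicity, once it is a single token no later rule can split it.

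\textbf{The left side.} By Reachability, $\pre(t)\in P(\theta(s'))$. By Lemma~\ref{lem:ancestor-last-token}, $\pre(t)$ appears as the last token at some stage of tokenizing $s'$. If $\theta(s')=\pre(t)$, then $\pre(t)$ is the last token at every later stage, in particular at step $k-1$. Otherwise, $\pre(t)$ can only stop being last through the merge producing the child $u$ on the path. By Priority dominance that rule comes after $t$'s rule, and every deeper descendant on the path has even lower priority by monotonicity. Hence the last token of $s'$ after $k-1$ steps is exactly $\pre(t)$. For this we also need $\pre(t)$ to have been formed before step $k$, which again follows from Priority Monotonicity, since $t$ is produced from $\pre(t)$.

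\textbf{The boundary (main obstacle).} I will prove the first claim by induction on $j<k$. Assume the boundary survives the first $j-1$ rules, so the state is the concatenation of the $(j-1)$-step states of $s'$ and $\suc(t)$. Suppose rule $j=(a,b)$ fires across it, with $a$ the current last token on the $s'$ side and $b$ the current first token on the $\suc(t)$ side. At that stage $a$ lies in $P(\pre(t))$ by the left-side argument applied at step $j-1$, so $a$ is a suffix of $\pre(t)$ and is the last token of the $(j-1)$-step state of $\pre(t)$. I would then transport this configuration into the tokenization of the string $t=\pre(t)\suc(t)$ itself, via Lemma~\ref{lem:step-wise-consistency}. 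There the same pair $(a,b)$ would be merged at step $j<k$, destroying the boundary between $\pre(t)$ and $\suc(t)$ that canonicity of $t$ requires to survive until rule $k$. This contradicts canonicity.

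The delicate point is the left-to-right semantics of $\Tok_r$. In $s$, the token $a$ might be consumed by an earlier occurrence of rule $j$ to its left, or the context to the left of $\pre(t)$ might differ between $s$ and $t$. I would handle this by noting that, in $s$, if $a$ is consumed from the left by an occurrence of $(a',a)$ (which is then necessarily rule $j$ itself), the crossing merge does not happen at all. Conversely, if the crossing merge happens in $s$, the left neighbour of $a$ did not capture it. The harder direction is the one the contradiction actually needs: showing that in $t$ the occurrence $(a,b)$ is merged rather than pre-empted. I expect this to require checking that the configuration around $a$ in $t$ cannot pre-empt it either, and this is the step needing the most care.
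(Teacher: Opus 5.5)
\textbf{There is a genuine gap.} Your right-side and left-side arguments are essentially fine. One citation slip: that $\pre(t)$ is fully formed before rule $k$ does not follow from Priority Monotonicity, since $\pre(t)$ is not a successor-forest ancestor of $t$. It follows from canonicity of $t$, whose own $(k-1)$-step state is $[\pre(t),\suc(t)]$.

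The gap is in the boundary claim, which is the real content of the lemma. You leave it open exactly where the contradiction is needed. To contradict canonicity you must show that the tokenization of the string $t$ also merges $(a,b)$ at step $j$. As you suspect, this can fail if you only know what happens at position $|s'|$. With $b=a$ and rule $(a,a)$, the run of $a$'s ending at $|s'|$ can have different parity in $s$ and in $t$, because in $s$ it may extend past the left end of $\pre(t)$. For example, a local state $a \mid a\,a \mid a$ across the left end of $\pre(t)$ and $|s'|$ merges across $|s'|$ in $s$ but not in $t$. Your induction tracks only the boundary at $|s'|$, so nothing controls the scan phase at the left end of $\pre(t)$. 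For the same reason Lemma~\ref{lem:step-wise-consistency} cannot be used to pass to the string $t$: that needs a surviving boundary at position $|s|-|t|$ in the state of $s$.

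\textbf{The missing idea} is to also track that left boundary, call it $\lozenge_t$, at position $|s'|-|\pre(t)|$.
\begin{itemize}
\item In the tokenization of $s'$ alone, Reachability forces $\lozenge_t$ to survive until $\pre(t)$ is the last token, because boundaries are never created.
\item After that, $\lozenge_t$ can only disappear through the rule of the child $u$. Priority dominance places that rule after rule $k$. So $\lozenge_t$ survives in $s'$ through step $k$.
\item Your inductive hypothesis says the state of $s$ before step $j$ is the $s'$-state followed by the $\suc(t)$-state. The left-to-right scan decides each pair lying inside $s'$ using only tokens to its left. So the pair straddling $\lozenge_t$ is treated the same in $s$ as in $s'$, and $\lozenge_t$ also survives step $j$ in $s$.
\item Hence the scan in $s$ reaches the first token after $\lozenge_t$ unconsumed. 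From there it coincides with the scan of the $(j-1)$-step state of the string $t$.
\item So a crossing merge at $|s'|$ in $s$ would occur in $t$ at step $j<k$, which contradicts canonicity. (Your bad configuration above is excluded precisely because it would eliminate $\lozenge_t$.)
\end{itemize}

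This is what the paper does. It follows both $\lozenge_t$ and $\lozenge_{\suc(t)}$. It truncates at $\lozenge_{\suc(t)}$ to rule out early elimination of $\lozenge_t$, using Reachability or Priority dominance, and at $\lozenge_t$ to rule out early elimination of $\lozenge_{\suc(t)}$, using canonicity of $t$. Once both boundaries are known to survive, your separate right-side and left-side arguments are unnecessary. Truncating at $\lozenge_t$ directly gives the $(k-1)$-step state $[\pre(t),\suc(t)]$ of $t$ as the tail of the sequence.
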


\begin{proof}
For the string $s$,
let the token boundary immediately to the left of the suffix string $\suc(t)$ be $\lozenge_{\suc(t)}$,
the token boundary immediately to the left of the suffix string $t$ be $\lozenge_{t}$.
We term the token boundaries strictly between $\lozenge_t$ and $\lozenge_{\suc(t)}$ as the \textit{internal boundaries} of $\pre(t)$.

The following proof will focus on the status of $\lozenge_t$, $\lozenge_{\suc(t)}$, and the internal boundaries of $\pre(t)$.

\paragraph{Step 1: The internal boundaries of $\pre(t)$ must be eliminated first.}
We first prove that during the merges before applying the canonical rule of $t$,
no rule can eliminate $\lozenge_t$ or $\lozenge_{\suc(t)}$
before all token boundaries between them are eliminated.

Suppose a merge rule $r$ attempts to eliminate one or more of these boundaries
while at least one internal boundary still exists:
\begin{itemize}
  \item If the rule $r$ eliminates $\lozenge_t$:
    By Lemma~\ref{lem:step-wise-consistency},
    we can perform token-wise truncation at $\lozenge_{\suc(t)}$ immediately before the rule $r$,
    leaving the tokenization of the prefix string $s^{-\suc(t)}$.
    The tokenization before and after the rule $r$ must be valid for $s^{-\suc(t)}$.
    However, with the merge rule $r$,
    $\lozenge_t$ is eliminated before the internal boundaries of $\pre(t)$ are cleared,
    $\pre(t)$ can never become an isolated token as the last token during the tokenization.
    This leads to $\pre(t) \notin P(\theta(s^{-\suc(t)}))$,
    which contradicts the \textbf{Reachability} condition.
  \item If the rule $r$ eliminates $\lozenge_{\suc(t)}$ while $\lozenge_t$ survives:
    We truncate at $\lozenge_t$. The suffix should form the valid tokenization of $t$.
    However, the boundary between $\pre(t)$ and $\suc(t)$ is eliminated
    before the internal boundaries of $\pre(t)$ are cleared.
    This contradicts the proper merge sequence of the canonical token $t$.
\end{itemize}

Furthermore, if the internal boundaries exist,
there are at least 3 consecutive token boundaries.
Since BPE only merges adjacent pairs without overlapping,
the rule $r$ can eliminate at most half of the consecutive boundaries.
Thus, it is impossible for all of $\lozenge_t$, $\lozenge_{\suc(t)}$,
and the internal boundaries of $\pre(t)$ to be eliminated simultaneously.

This confirms that any attempt to eliminate $\lozenge_t$ or $\lozenge_{\suc(t)}$
must happen strictly after the internal boundaries are fully eliminated.

Furthermore, since the truncation between $\lozenge_t$ and $\lozenge_{\suc(t)}$
yields the valid tokenization of $\pre(t)$,
and the canonical rule of $\pre(t)$ has a strictly higher priority than the canonical rule of $t$
(or $\pre(t)$ is an atomic token),
it follows that immediately before executing the canonical rule of $t$ during the tokenization of $s$,
both $\lozenge_t$ and $\lozenge_{\suc(t)}$ must survive without any internal boundaries between them,
which means $\pre(t)$ between $\lozenge_t$ and $\lozenge_{\suc(t)}$ should become a single isolated token.

\paragraph{Step 2: $\pre(t)$ cannot be consumed before the canonical rule of $t$ is executed.}
Next, consider the process after $\pre(t)$ is completely formed but before the canonical rule of $t$ is executed.
\begin{itemize}
  \item If $\pre(t)$ merges leftward (acting as a successor):
    The boundary $\lozenge_t$ is eliminated.
    Since a single BPE merge eliminates at most one boundary around each token,
    $\lozenge_{\suc(t)}$ must survive.
    Truncating at $\lozenge_{\suc(t)}$, the prefix corresponds to the tokenization of $s^{-\suc(t)}$.
    This merge implies $\pre(t)$ is merged into a larger token during the tokenization of $s^{-\suc(t)}$,
    so $\theta(s^{-\suc(t)}) \ne \pre(t)$.
    By definition, the executed rule is exactly the unique child of $\pre(t)$ on $P(\theta(s^{-\suc(t)}))$.
    However, the \textbf{Priority dominance} requires the canonical rule of $t$
    to have a strictly higher priority than this child.
    Thus, it is impossible for this merge to occur before the canonical rule of $t$,
    contradicting the assumption.
  \item If $\pre(t)$ merges rightward:
    The merge eliminates $\lozenge_{\suc(t)}$ while $\lozenge_t$ survives.
    Truncating at $\lozenge_t$, the suffix must represent the valid tokenization of $t$.
    However, the canonical rule of $t$ is not yet applied,
    while the prefix token $\pre(t)$ merges rightward with a token other than $\suc(t)$.
    This contradicts the proper merge sequence of the canonical token $t$.
\end{itemize}

\paragraph{Step 3: The right side must solely be $\suc(t)$.}
Finally, immediately prior to executing the canonical rule of $t$,
suppose there are still other surviving token boundaries
between $\lozenge_{\suc(t)}$ and the end of the sequence.
If we truncate at $\lozenge_t$,
the isolated suffix sequence would contain more than two tokens.
However, in the valid tokenization of the canonical token $t$,
the token sequence exactly prior to applying its canonical rule
must consist of exactly two tokens: $\pre(t)$ and $\suc(t)$.
Contradiction.

\paragraph{Conclusion.}
Thus, immediately prior to executing the canonical rule of $t$,
the token sequence ends exactly with $\pre(t)$ and $\suc(t)$.
\end{proof}

With the above notation and lemmas in place,
we now prove Theorem~\ref{thm:monotonic-path}
by establishing four structural claims:
upward closure, uniqueness, validity of $\theta(s)$, and maximality.

\paragraph{Claim 1 (Upward Closure).}
Let $s$ be a non-empty string
and let $k$ be the longest canonical suffix token of $s$.
For any node $w \in \SufSucTree(k)$,
if $w$ satisfies the Prefix Last-Token Condition (Definition~\ref{def:prefix-last-token})
with respect to $s$,
then its parent $\suc(w)$ in the Successor Forest (if it exists)
also satisfies the Prefix Last-Token Condition.

\begin{proof}
If $w$ is an atomic token, it is a root, making the claim vacuous.
Assume $w$ is non-atomic.
Let $v = \suc(w)$. We aim to prove $v$ satisfies the condition.
If $v$ is an atomic token, it satisfies the condition by definition.
Thus, we further assume $v$ is non-atomic.

\paragraph{Notations.}
We define the following components and their corresponding canonical merge rules:
\begin{itemize}
  \item $u = \pre(w)$ and $v = \suc(w)$, with $r_w$ being the canonical rule $(u, v) \to w$.
  \item $x = \pre(v)$ and $y = \suc(v)$, with $r_v$ being the canonical rule $(x, y) \to v$.
\end{itemize}
Since $w$ is a suffix of $s$ and $v$ is a suffix of $w$,
the token $y$ is exactly the suffix token of $s$.
Let $s^{-y}$ denote the prefix string obtained by removing $y$ from $s$.

\paragraph{Step 1: Reachability.}
Because $w$ satisfies the condition,
Lemma~\ref{lem:exclusive-priority}
guarantees that during the tokenization of $s$,
immediately prior to executing $r_w$,
the rightmost two tokens are exactly $u$ and $v$.
The existence of the fully formed token $v$ requires that
its canonical rule $r_v$ must have been executed at some earlier stage.

Backtracking to the state immediately prior to the execution of $r_v$,
the token sequence must end exactly with $x$ and $y$.
At this specific moment, the token boundary between $x$ and $y$ exists.
By Lemma~\ref{lem:step-wise-consistency},
performing token-wise truncation at this boundary (removing $y$)
yields the valid tokenization of $s^{-y}$ at this stage, ending with the token $x$.
By Lemma~\ref{lem:ancestor-last-token},
since $x$ appears as the last token during the tokenization of $s^{-y}$, we have $x \in P(\theta(s^{-y}))$.
This establishes the \textbf{Reachability} condition for $v$.

\paragraph{Step 2: Priority Dominance.}
Suppose $x \ne \theta(s^{-y})$.
This implies the final $x$ of $s^{-y}$ will eventually be consumed leftward by some merge rule $r_{c_x}$ to form $c_x$
(the unique child of $x$ on $P(\theta(s^{-y}))$).

Let $\varphi$ be the token sequence of $s$ immediately prior to executing $r_v$.
As established in Step 1, $\varphi$ ends exactly with $x$ and $y$,
so we can write $\varphi = \eta \oplus [x, y]$ for some prefix sequence $\eta$.
By Lemma~\ref{lem:step-wise-consistency},
performing token-wise truncation yields $\mu = \eta \oplus [x]$,
which is exactly the token sequence of $s^{-y}$ at this same stage.

Since the final token $x$ remains unmerged in $\mu$, the rule $r_{c_x}$ has not yet been applied.
Thus, $r_{c_x}$ cannot have a higher priority than $r_v$.

Suppose their priorities are equal, meaning $r_{c_x}$ is exactly $r_v$, i.e., $c_x = v$.
For $r_v$ to consume $x$ leftward, given its definition $(x, y) \to v$, we must have $x = y$.
This makes the rule $r_v = (x, x) \to v$.
As $r_{c_x}$ must be applied and alter the last token of $\Tok_{r_{c_x}}(\mu)$ to $v$,
the token sequences can be written as
$\mu = \eta' \oplus [x, x]$ and $\varphi = \eta' \oplus [x, x, x]$.
Now consider the deterministic left-to-right matching of the rule $(x, x) \to v$.
Either $\Tok_{r_v}(\mu)$ or $\Tok_{r_v}(\varphi)$ exactly ends with the token $v$,
while the other ends with the token $x$,
contradicting the analysis where both must end with $v$.

Therefore, the canonical rule $r_v$ must have a strictly higher priority than $r_{c_x}$,
satisfying the \textbf{Priority dominance} condition for token $v$ with respect to $s$.

\paragraph{Conclusion.}
With both conditions verified,
$\suc(w) = v$ satisfies the Prefix Last-Token Condition with respect to $s$.
\end{proof}

\paragraph{Claim 2 (Uniqueness of the satisfying child).}
Let $s$ be a non-empty string.
For any node $v$, there exists \textit{at most one} child $w$ of $v$
that satisfies the Prefix Last-Token Condition
(Definition~\ref{def:prefix-last-token})
with respect to $s$.

\begin{proof}
Assume for contradiction that
there exist two distinct nodes $w_a \ne w_b$ that both satisfy the condition
with respect to $s$,
and share the same successor $v = \suc(w_a) = \suc(w_b)$.
Let $u_a = \pre(w_a)$ and $u_b = \pre(w_b)$.
Let $r_{w_a} = (u_a, v)$ and $r_{w_b} = (u_b, v)$.
Since $w_a \ne w_b$, we have $u_a \ne u_b$.

By the \textbf{Reachability} condition, both $u_a, u_b \in P(\theta(s^{-v}))$.
Since $P(\theta(s^{-v}))$ is a chain on a tree,
one must be an ancestor of the other.
Without loss of generality,
assume $u_a$ is a proper ancestor of $u_b$.
Thus $u_a \ne \theta(s^{-v})$.

This implies that during the tokenization of $s^{-v}$,
the token $u_a$ is formed earlier than $u_b$
and is required to form $u_b$.
For $u_b$ to eventually form,
$u_a$ must uniquely merge leftward via some rule $r_{c_a} = (k, u_a) \to c_a$,
where $c_a$ is the unique child of $u_a$ on $P(u_b)$.
Since $u_a, u_b \in P(\theta(s^{-v}))$,
$c_a$ is also the unique child of $u_a$ on $P(\theta(s^{-v}))$.

By the \textbf{Priority dominance} condition for $w_a$,
we have $r_{w_a} > r_{c_a}$ and $w_a \ne c_a$.

Because $w_a$ satisfies the condition,
Lemma~\ref{lem:exclusive-priority}
guarantees that during the tokenization of $s$,
immediately prior to executing the rule $r_{w_a}$,
the rightmost two tokens are exactly $u_a$ and $v$.
Likewise, immediately prior to executing the rule $r_{w_b}$,
the rightmost two tokens are exactly $u_b$ and $v$.

Because $u_a$ is a proper ancestor of $u_b$,
during the forward tokenization of $s$,
the state where $u_a$ and $v$ are the rightmost two tokens
must strictly precede the state where $u_b$ and $v$ are the rightmost two tokens.

During the tokenization of $s$,
when executing the rule $r_{w_a}$,
the token boundary immediately to the left of the last token $v$ must survive the merge;
otherwise, the last token would never be $v$ immediately prior to executing the rule $r_{w_b}$ later on.
This means that the execution of $r_{w_a}$
leaves a mergeable adjacent token pair $[u_a, v]$ intact at the end.
However, by the left-to-right nature of BPE,
this will only occur when the predecessor and the successor of the merge rule are the same,
and there is a sequence of at least three such identical tokens being adjacent.
\textbf{Thus, we must have $u_a = v$ and $w_a$ is a child of $u_a$.}

Hence, immediately prior to executing the rule $r_{w_a}$ during the tokenization of $s$,
the rightmost three tokens in the sequence are exactly the same: the token $v$.
After executing the rule $r_{w_a}$,
the rightmost two tokens become exactly $w_a$ and $v$.
Consider the token-wise truncation removing the last token $v$ to obtain the valid tokenization of $s^{-v}$.
At this specific stage, we have $w_a$ as the last token of the tokenization of $s^{-v}$,
i.e., $w_a \in P(\theta(s^{-v}))$.

Since $w_a$ is a child of $u_a$, $w_a$ is the unique child of $u_a$ on $P(\theta(s^{-v}))$,
contradicting $w_a \ne c_a$.

Therefore, the assumption is false.
\end{proof}

\paragraph{Claim 3 (The answer node satisfies the condition).}
Let $w = \theta(s)$ be the last token for a non-empty string $s$ after tokenization.
Then $w$ satisfies the Prefix Last-Token Condition (Definition~\ref{def:prefix-last-token})
with respect to $s$.

\begin{proof}
If $w$ is an atomic token, it satisfies the condition by definition.
Assume $w$ is non-atomic. Let $u = \pre(w)$ and $v = \suc(w)$,
with $r_w$ being the canonical merge rule $(u, v) \to w$.

Since $w = \theta(s)$, the tokenization of $s$ eventually produces $w$ as its last token.
This implies that immediately prior to the execution of $r_w$,
the token sequence of $s$ must end exactly with $u$ and $v$.

\paragraph{Step 1: Reachability.}
At this exact moment, the boundary between $u$ and $v$ exists.
By Lemma~\ref{lem:step-wise-consistency},
performing token-wise truncation at this boundary (removing $v$)
yields the valid tokenization of $s^{-v}$ at this stage, which ends with the token $u$.
By Lemma~\ref{lem:ancestor-last-token},
since $u$ appears as the last token during the tokenization of $s^{-v}$,
we have $u \in P(\theta(s^{-v}))$.
This establishes the \textbf{Reachability} condition for $w$.

\paragraph{Step 2: Priority Dominance.}
Suppose $u \ne \theta(s^{-v})$.
Then the last token $u$ during the tokenization of $s^{-v}$
must eventually be consumed leftward by some merge rule $r_{c_u} = (k, u) \to c_u$,
where $c_u$ is the unique child of $u$ on $P(\theta(s^{-v}))$.

Since $u$ appears as the last token,
during the tokenization of $s^{-v}$ immediately prior to the execution of $r_w$,
the rule $r_{c_u}$ has not yet been applied,
meaning its priority cannot be strictly higher than $r_w$,
i.e., $r_{c_u} \le r_w$.

Suppose their priorities are equal, i.e., $c_u = w$.
Since $r_w = (u, v)$ and $r_{c_u} = (k, u)$, we must have $u = v$,
making the rule $r_w = (u, u) \to w$.

Since $u \ne \theta(s^{-v})$ and $c_u$ is the unique child of $u$ on $P(\theta(s^{-v}))$,
during the tokenization of $s^{-v}$ when applying $r_{c_u} = r_w$,
the last token must be altered to $w$ in order to form $\theta(s^{-v})$.
However, since $w = \theta(s)$,
during the tokenization of $s$ when applying $r_w$,
the last token must be altered to $w$.
Similar to the proof of Step 2 in Claim 1,
this will be impossible.
Thus $c_u \ne w$ and $r_{c_u} < r_w$.

This establishes the \textbf{Priority Dominance} condition for $w$.
\end{proof}

\paragraph{Claim 4 (The answer node has no children satisfying the condition).}
Let $s$ be a non-empty string.
The answer node $\theta(s)$ has no children
that satisfy the Prefix Last-Token Condition (Definition~\ref{def:prefix-last-token})
with respect to $s$.

\begin{proof}
Let $v = \theta(s)$.
Assume for contradiction that there exists a child node $w$ of $v$ (meaning $v = \suc(w)$)
that satisfies the condition with respect to $s$.
Let $u = \pre(w)$, and let $r_w$ be the canonical merge rule $(u, v) \to w$.

Because $v = \theta(s)$,
the Prefix Consistency (Lemma~\ref{lem:prefix-consistency})
guarantees that the final tokenization of $s$
is exactly the tokenization of $s^{-v}$ appended with $v$,
i.e., $\Tok(s) = \Tok(s^{-v}) \oplus [v]$.

By the \textbf{Reachability} condition for $w$,
we have $u \in P(\theta(s^{-v}))$.
We consider the two possible cases for $u$:

\paragraph{Case 1: $u = \theta(s^{-v})$.}
If $u$ is exactly the last token of $s^{-v}$, then $\Tok(s^{-v})$ ends with $u$.
Consequently, the final tokenization $\Tok(s)$ must end exactly with the adjacent pair of tokens $[u, v]$.
However, since $r_w = (u, v) \to w$ is a valid merge rule in the vocabulary,
the deterministic BPE process would not terminate leaving the matchable pair $[u, v]$ intact.
If $u \ne v$, the tokenization of $s$ will not terminate with the isolated token $v$ as the last token.
Even if $u = v$,
in the final tokenization $\Tok(s)$,
the rightmost two tokens would at most be $w$ and $v$,
rather than two adjacent $u$'s,
meaning the tokenization $\Tok(s^{-v})$ would have $w$ as the last token.
Hence, $u$ will not be $\theta(s^{-v})$.

\paragraph{Case 2: $u \ne \theta(s^{-v})$.}
If $u$ is not the last token of $s^{-v}$,
it must eventually be consumed leftward by some merge rule $r_{c_u} = (k, u) \to c_u$,
where $c_u$ is the unique child of $u$ on $P(\theta(s^{-v}))$.
Because $w$ satisfies the condition,
the \textbf{Priority dominance} strictly dictates that $r_w > r_{c_u}$.

During the tokenization of the string $s$,
since $v = \theta(s)$,
the suffix token $v$ must never be consumed by any rightward merges
after its formation.
So the token boundary immediately to the left of the suffix $v$ is never eliminated.
By Lemma~\ref{lem:truncation-consistency},
we can truncate the tokenization of $s$
removing the suffix corresponding to the string $v$
and obtain the valid tokenization of $s^{-v}$ after the execution of each merge rule.

Consider the tokenizations of both $s^{-v}$ and $s$ immediately prior to executing $r_w$,
where the tokens $u$ and $v$ have been formed,
as they are the predecessor and the successor of $w$.
At this stage, $r_{c_u}$ has not been executed since $r_w > r_{c_u}$.
Hence, at this moment, the last token of the tokenization of $s^{-v}$ is $u$,
and the last token of the tokenization of $s$ is $v$.
So the rightmost two tokens of the tokenization of $s$ are $u$ and $v$.

However, since $v = \theta(s)$,
the execution of $r_w$ must skip the rightmost adjacent $u$ and $v$,
which is only possible when $u = v$.
Similar to the proof of Step 2 in Claim 1,
this will also be impossible,
as it leads to the case where $w = c_u$,
directly contradicting the requirement of \textbf{Priority dominance}
($r_w > r_{c_u}$).

\paragraph{Conclusion.}
Both cases yield a direct contradiction.
Therefore, the assumption is false,
and the answer node $\theta(s)$ has no children satisfying the condition.
\end{proof}

\paragraph{Conclusion of the Proof.}
We now conclude the proof of Theorem~\ref{thm:monotonic-path}.
Claim~1 establishes that the Prefix Last-Token Condition
is upward closed along successor edges.
Claim~2 shows that for any node,
at most one child can satisfy the condition.
Claim~3 guarantees that the true last token $\theta(s)$
does satisfy the condition,
while Claim~4 shows that it is maximal
and admits no further extension.

Together, these four claims imply that
the set of tokens in $\SufSucTree(t)$
satisfying the Prefix Last-Token Condition
forms exactly a single, maximal path
from $\theta(s)$ to the root of the tree.
This completes the proof.

\section{Square-Root Tiled Transition Table}
\label{app:sqrt-trans-table}

In an Aho--Corasick automaton \citep{aho-corasick},
a straightforward implementation stores a full transition array
over the alphabet $\Sigma$ for every state,
yielding constant-time transitions at the cost of a large memory footprint.

A key observation is that,
for any state,
its transition table differs from that of its suffix link
only at entries corresponding to outgoing trie edges.
This structural locality allows transitions to be shared persistently
across states.

We exploit this property using a square-root tiling of the alphabet.
The alphabet is partitioned into fixed-size tiles;
each state stores references to its tiles,
and a tile is duplicated only when a transition within it is modified.
As a result, most tiles are shared between a state and its suffix link,
while updates remain local to the affected tiles.

This representation preserves $\mathcal{O}(1)$ transition queries
and supports efficient construction,
while substantially reducing memory usage in practice.

\section{Eager Output Algorithm Details}
\label{app:eager-algo}

Based on the Active Frontier definition in \cref{sec:eager-output-active-frontier},
we maintain the subgraph of the Prefix Tree of Tokens
composed of the nodes in $\mathcal{P}$ and their ancestors
to identify the common ancestral path.
The algorithm proceeds as follows:

\begin{enumerate}
  \item \textbf{Window Maintenance}:
    We maintain $\mathcal{P}$ using two pointers.
    The right pointer adds the new $\theta(s)$.
    The left pointer advances to satisfy the bound $|s| - d(s)$, removing expired tokens,
    where the function $d$ is defined in \cref{sec:eager-output-active-frontier}.

  \item \textbf{Subgraph Tracking}:
    We maintain the nodes in $\mathcal{P}$ and their ancestors as a dynamic subgraph.
    For each node in this subgraph, we record the number of its children that are also part of the subgraph.

  \item \textbf{Eager Emission}:
    We maintain the children of the virtual root.
    If the virtual root has exactly \textbf{one child} $c$ in the subgraph,
    it implies that all Parental Candidates (and thus all possible futures)
    are descendants of the node $c$.
    The token represented by $c$ is therefore stable.
    We emit the corresponding token, update the virtual root to $c$, and repeat the check.
\end{enumerate}

Each node in the Prefix Tree of Tokens
is inserted into the maintained subgraph exactly once,
at the moment when the right pointer includes the corresponding last token into $\mathcal{P}$.
A node may persist in the subgraph even after it ceases to belong to $\mathcal{P}$,
but it will be removed at most once,
when it no longer lies on the ancestral path of any active Parental Candidate.

During the advancement of the left pointer,
each step removes either one node from the subgraph
or performs a constant amount of bookkeeping without removal.
Consequently, each node contributes $\mathcal{O}(1)$ work
over the entire execution of the algorithm.

Therefore, the eager output mechanism incurs an amortized
$\mathcal{O}(1)$ time overhead per input update.

\section{Detailed Benchmarks}
\label{app:detailed-benchmarks}

\subsection{Experiment Setup}

All benchmarks were conducted on a bare-metal server equipped
with an \textbf{Intel\textsuperscript{\textregistered} Xeon\textsuperscript{\textregistered} Platinum 8362 CPU @ 2.80GHz}.
To ensure reproducibility and minimize measurement noise, we applied the following configurations:
\begin{itemize}
    \item \textbf{CPU Isolation}: Experiments were pinned to a single NUMA node (Node 1) containing 32 physical cores and 128 GB of RAM.
    \item \textbf{System Tuning}: Hyper-threading (SMT), Turbo Boost, and Address Space Layout Randomization (ASLR) were disabled.
    \item \textbf{Concurrency}:
      Tests were executed using 30 parallel workers.
      Each worker ran as an isolated, single-threaded process that independently loaded the tokenizer and dataset.
\end{itemize}

To ensure the validity of our results,
we verified the correctness of our implementations against baseline methods
prior to performance benchmarking.
For all benchmarks (\texttt{tokenizers} and \texttt{tiktoken} bindings),
we changed the global allocator to \texttt{mimalloc}~\citep{mimalloc, mimalloc-rust}
and fixed seeds for hash functions to guarantee stable and reproducible latency measurements.

\subsection{Datasets}

We constructed three representative datasets using specific subsets of publicly available datasets.
\cref{tab:datasets} provides a summary of the dataset statistics.

\begin{table}[htbp]
  \centering
  \caption{\textbf{Datasets used in experiments.}}
  \label{tab:datasets}
  \begin{tabular}{l l r r}
  \toprule
  \textbf{Name (Reference)} & \textbf{Repository} & \textbf{\#Documents} & \textbf{Total \#Bytes} \\
  \midrule
  English \citep{wiki-cite} & \citealp{wiki-repo} & 3,722 & 16,522,972 \\
  Chinese \citep{wiki-cite} & \citealp{wiki-repo} & 3,847 & 16,337,583 \\
  Code \citep{redpajama-cite} & \citealp{redpajama-repo} & 2,500 & 16,803,408 \\
  \bottomrule
  \end{tabular}
\end{table}

The specific sampling strategies were as follows:
\begin{itemize}
  \item \textbf{English \& Chinese}:
    We utilized the first Parquet shard from the standard training split of the Wikipedia dataset (20231101 dump),
    and employed a strided sampling strategy,
    extracting every 42nd document for English and every 60th document for Chinese.
  \item \textbf{Code}:
    We extracted data from the RedPajama GitHub subset~\citep{redpajama-repo},
    specifically using the file with the filename
    ``\path{filtered_08cdfa755e6d4d89b673d5bd1acee5f6.sampled.jsonl}''.
    We selected the first 2,500 documents from this file.
\end{itemize}

Additionally, for pathological input analysis,
we constructed strings consisting of $2^k$ repetitions of the character `\texttt{a}'.

\subsection{Tokenizers}

We evaluated a diverse set of tokenizers supported by two major libraries:
Hugging Face's \texttt{tokenizers} and OpenAI's \texttt{tiktoken}.

\begin{table}[htbp]
  \centering
  \caption{\textbf{Tokenizers used in experiments for Hugging Face's \texttt{tokenizers}.}}
  \label{tab:hf-tokenizers}
  \begin{tabular}{l l r r}
  \toprule
  \textbf{Name (Reference)} & \textbf{Repository} & \textbf{Vocab Size} & \textbf{P99 Token Length} \\
  \midrule
  CodeLlama \citep{codellama-paper} & \citealp{codellama-repo} & 32,016 & 12 \\
  DeepSeek-3.2 \citep{deepseek-3-2-paper} & \citealp{deepseek-3-2-repo} & 128,000 & 15 \\
  Gemma-3\textsuperscript{\dag} \citep{gemma-3-paper} & \citealp{gemma-3-repo} & 262,144 & 14 \\
  GPT-OSS \citep{gpt-oss-paper} & \citealp{gpt-oss-repo} & 199,998 & 19 \\
  Llama-3.1\textsuperscript{*} \citep{llama-3-1-paper} & \citealp{llama-3-1-repo} & 128,000 & 17 \\
  Llama-4 \citep{llama-4-paper} & \citealp{llama-4-repo} & 200,000 & 21 \\
  Mistral-3 \citep{mistral-3-paper} & \citealp{mistral-3-repo} & 131,072 & 16 \\
  Ouro \citep{ouro-paper} & \citealp{ouro-repo} & 49,152 & 14 \\
  Qwen-3 \citep{qwen-3-paper} & \citealp{qwen-3-repo} & 151,643 & 18 \\
  \bottomrule
  \end{tabular}

  \begin{flushleft}
  \scriptsize
  \textsuperscript{\dag} Improper dictionary.
  See Appendix~\ref{app:non-properizable} for non-properizable cases.
  \hspace{1em}
  \textsuperscript{*} Properized dictionary.
  See Appendix~\ref{app:properizing} for properization.
  \end{flushleft}
\end{table}

\paragraph{Hugging Face Models.}
\cref{tab:hf-tokenizers}
summarizes the open-source tokenizers selected for our experiments.
These models exhibit varying pre-tokenization strategies:
\begin{itemize}
  \item \textbf{Regex-Based}:
    The majority of the models (DeepSeek-3.2, GPT-OSS, Llama-3.1, Llama-4, Mistral-3, and Qwen-3)
    employ regex-based rules to split input text before tokenization.
  \item \textbf{Rule-Based}:
    Ouro utilizes a digit-based pre-tokenization,
    and Gemma-3 employs a rule on space characters.
  \item \textbf{None}:
    Notably, CodeLlama applies \textbf{no pre-tokenization} logic.
\end{itemize}

Regarding the ``properness'' of dictionaries,
the improper merge rules of Llama-3.1 are properized for our benchmarks
(see Appendix~\ref{app:properizing}).
In contrast, Gemma-3's merge rules are improper and cannot be properized
(see Appendix~\ref{app:non-properizable}).

\begin{table}[htbp]
  \centering
  \caption{\textbf{Tokenizers used in experiments for OpenAI's \texttt{tiktoken}.}}
  \label{tab:tk-tokenizers}
  \begin{tabular}{l l r r}
  \toprule
  \textbf{Name} & \textbf{Code Name} & \textbf{Vocab Size} & \textbf{P99 Token Length} \\
  \midrule
  CL100K & \texttt{cl100k\_base} & 100,256 & 16 \\
  O200K & \texttt{o200k\_base} & 199,998 & 19 \\
  P50K & \texttt{p50k\_base} & 50,280 & 14 \\
  R50K & \texttt{r50k\_base} & 50,256 & 14 \\
  \bottomrule
  \end{tabular}
\end{table}

\paragraph{OpenAI Models.}
For the \texttt{tiktoken} library,
we utilized the standard built-in encoding models.
\cref{tab:tk-tokenizers}
lists the specific models that we used.
Note that all tested \texttt{tiktoken} models employ regex-based pre-tokenization.

\subsection{Detailed Results}

We report throughput in terms of MiB/s based on the median of multiple execution runs.
For each benchmark setting, throughput is computed as the total number of input bytes
divided by the sum of the per-document median execution times.
This aggregation reflects the effective end-to-end processing rate
under repeated, independent tokenization workloads.

We evaluate two input regimes, indicated by the \textbf{Eval} column in
\cref{tab:detailed-benchmarks}.
In the \texttt{concat} setting, all documents in a dataset are concatenated
into a single input stream, separated by newline characters,
and tokenized as one continuous sequence.
In the \texttt{per-doc} setting, each document is provided to the tokenizer
as an independent input, and execution times are measured separately per document
before aggregation.
The results reported in the main text (\cref{tab:speedup-results})
correspond to the \texttt{concat} setting, while \cref{tab:detailed-benchmarks}
provides the full breakdown
across both evaluation regimes.

\newcommand{\CC}{\ding{52}}
\newcommand{\XX}{\ding{56}}
\newcommand{\NA}{\textemdash}
\newcommand{\GT}[1]{\textcolor{gray}{#1}}

\begin{longtable}{c c c c S[table-format=2.2] S[table-format=2.2] S[table-format=2.2] S[table-format=1.2]}
\caption{
  \textbf{Detailed throughput benchmarks.}
  \hspace{1em}
  \textbf{Base}: Original built-in BPE implementation.
  \textbf{Inc}: Our incremental BPE.
  \textbf{Eager}: ``Inc'' with eager output.
  \textbf{I/B}: Relative speedup of $\text{Inc} / \text{Base}$.
}
\label{tab:detailed-benchmarks}
\\
\toprule
\textbf{Dataset} & \textbf{Eval} & \textbf{Tokenizer} & \textbf{Cache} & {\textbf{Base}} & {\textbf{Inc}} & {\textbf{Eager}} & {\textbf{I/B}} \\
& & & & {(MiB/s)} & {(MiB/s)} & {(MiB/s)} & {($\times$)} \\
\midrule
\endfirsthead

\caption{\textbf{Detailed throughput benchmarks for Hugging Face's \texttt{tokenizers}.}}\\
\toprule
\textbf{Dataset} & \textbf{Eval} & \textbf{Tokenizer} & \textbf{Cache} & {\textbf{Base}} & {\textbf{Inc}} & {\textbf{Eager}} & {\textbf{I/B}} \\
& & & & {(MiB/s)} & {(MiB/s)} & {(MiB/s)} & {($\times$)} \\
\midrule
\endhead

\midrule
\multicolumn{8}{r}{\textit{Continued on next page...}} \\
\endfoot
\bottomrule
\endlastfoot

\multicolumn{8}{l}{\small\textit{Hugging Face's \texttt{tokenizers}}} \\
\GT{English} & \GT{concat} & CodeLlama & \CC & 0.80 & 2.49 & 2.23 & \textbf{3.13} \\
\GT{English} & \GT{concat} & CodeLlama & \XX & 0.80 & 2.45 & 2.19 & \textbf{3.05} \\
\GT{English} & \GT{concat} & DeepSeek-3.2 & \CC & 1.52 & 1.54 & 1.51 & \textbf{1.01} \\
\GT{English} & \GT{concat} & DeepSeek-3.2 & \XX & 1.42 & 1.48 & 1.44 & \textbf{1.04} \\
\GT{English} & \GT{concat} & GPT-OSS & \CC & 1.89 & 1.90 & 1.88 & 1.00 \\
\GT{English} & \GT{concat} & GPT-OSS & \XX & 1.89 & 1.92 & 1.89 & \textbf{1.01} \\
\GT{English} & \GT{concat} & Llama-3.1 & \CC & 1.84 & 1.82 & 1.79 & 0.99 \\
\GT{English} & \GT{concat} & Llama-3.1 & \XX & 1.87 & 1.87 & 1.85 & 1.00 \\
\GT{English} & \GT{concat} & Llama-4 & \CC & 1.84 & 1.86 & 1.84 & \textbf{1.01} \\
\GT{English} & \GT{concat} & Llama-4 & \XX & 1.86 & 1.88 & 1.85 & \textbf{1.01} \\
\GT{English} & \GT{concat} & Mistral-3 & \CC & 1.83 & 1.83 & 1.80 & 1.00 \\
\GT{English} & \GT{concat} & Mistral-3 & \XX & 1.84 & 1.83 & 1.80 & 0.99 \\
\GT{English} & \GT{concat} & Ouro & \CC & 1.59 & 1.59 & 1.56 & 1.00 \\
\GT{English} & \GT{concat} & Ouro & \XX & 1.51 & 1.57 & 1.50 & \textbf{1.04} \\
\GT{English} & \GT{concat} & Qwen-3 & \CC & 1.52 & 1.59 & 1.54 & \textbf{1.05} \\
\GT{English} & \GT{concat} & Qwen-3 & \XX & 1.34 & 1.50 & 1.45 & \textbf{1.12} \\
\multicolumn{8}{l}{\small\textit{OpenAI's \texttt{tiktoken}}} \\
\GT{English} & \GT{concat} & CL100K & \NA & 11.17 & 10.78 & 10.03 & 0.96 \\
\GT{English} & \GT{concat} & O200K & \NA & 6.21 & 6.12 & 5.98 & 0.99 \\
\GT{English} & \GT{concat} & P50K & \NA & 12.86 & 12.42 & 11.45 & 0.97 \\
\GT{English} & \GT{concat} & R50K & \NA & 12.83 & 12.38 & 11.49 & 0.96 \\
\midrule
\multicolumn{8}{l}{\small\textit{Hugging Face's \texttt{tokenizers}}} \\
\GT{English} & \GT{per-doc} & CodeLlama & \CC & 2.59 & 3.87 & 3.20 & \textbf{1.49} \\
\GT{English} & \GT{per-doc} & CodeLlama & \XX & 2.59 & 3.89 & 3.20 & \textbf{1.50} \\
\GT{English} & \GT{per-doc} & DeepSeek-3.2 & \CC & 2.04 & 2.15 & 2.09 & \textbf{1.05} \\
\GT{English} & \GT{per-doc} & DeepSeek-3.2 & \XX & 1.83 & 2.04 & 1.91 & \textbf{1.11} \\
\GT{English} & \GT{per-doc} & GPT-OSS & \CC & 2.78 & 2.80 & 2.73 & \textbf{1.01} \\
\GT{English} & \GT{per-doc} & GPT-OSS & \XX & 2.77 & 2.81 & 2.75 & \textbf{1.01} \\
\GT{English} & \GT{per-doc} & Llama-3.1 & \CC & 2.74 & 2.78 & 2.71 & \textbf{1.01} \\
\GT{English} & \GT{per-doc} & Llama-3.1 & \XX & 2.78 & 2.81 & 2.72 & \textbf{1.01} \\
\GT{English} & \GT{per-doc} & Llama-4 & \CC & 2.67 & 2.74 & 2.66 & \textbf{1.03} \\
\GT{English} & \GT{per-doc} & Llama-4 & \XX & 2.66 & 2.77 & 2.74 & \textbf{1.04} \\
\GT{English} & \GT{per-doc} & Mistral-3 & \CC & 2.67 & 2.73 & 2.65 & \textbf{1.02} \\
\GT{English} & \GT{per-doc} & Mistral-3 & \XX & 2.71 & 2.79 & 2.71 & \textbf{1.03} \\
\GT{English} & \GT{per-doc} & Ouro & \CC & 2.26 & 2.29 & 2.21 & \textbf{1.01} \\
\GT{English} & \GT{per-doc} & Ouro & \XX & 2.05 & 2.18 & 2.06 & \textbf{1.07} \\
\GT{English} & \GT{per-doc} & Qwen-3 & \CC & 1.99 & 2.20 & 2.12 & \textbf{1.10} \\
\GT{English} & \GT{per-doc} & Qwen-3 & \XX & 1.67 & 2.08 & 1.95 & \textbf{1.25} \\
\multicolumn{8}{l}{\small\textit{OpenAI's \texttt{tiktoken}}} \\
\GT{English} & \GT{per-doc} & CL100K & \NA & 11.54 & 11.19 & 10.35 & 0.97 \\
\GT{English} & \GT{per-doc} & O200K & \NA & 6.30 & 6.23 & 5.99 & 0.99 \\
\GT{English} & \GT{per-doc} & P50K & \NA & 13.43 & 12.98 & 11.98 & 0.97 \\
\GT{English} & \GT{per-doc} & R50K & \NA & 13.43 & 12.99 & 11.90 & 0.97 \\
\midrule
\multicolumn{8}{l}{\small\textit{Hugging Face's \texttt{tokenizers}}} \\
\GT{Chinese} & \GT{concat} & CodeLlama & \CC & 2.33 & 2.55 & 2.47 & \textbf{1.10} \\
\GT{Chinese} & \GT{concat} & CodeLlama & \XX & 2.33 & 2.54 & 2.46 & \textbf{1.09} \\
\GT{Chinese} & \GT{concat} & DeepSeek-3.2 & \CC & 1.52 & 1.41 & 1.32 & 0.93 \\
\GT{Chinese} & \GT{concat} & DeepSeek-3.2 & \XX & 1.55 & 1.41 & 1.33 & 0.91 \\
\GT{Chinese} & \GT{concat} & GPT-OSS & \CC & 1.71 & 1.84 & 1.73 & \textbf{1.08} \\
\GT{Chinese} & \GT{concat} & GPT-OSS & \XX & 1.73 & 1.85 & 1.74 & \textbf{1.07} \\
\GT{Chinese} & \GT{concat} & Llama-3.1 & \CC & 1.79 & 1.86 & 1.75 & \textbf{1.03} \\
\GT{Chinese} & \GT{concat} & Llama-3.1 & \XX & 1.84 & 1.90 & 1.79 & \textbf{1.03} \\
\GT{Chinese} & \GT{concat} & Llama-4 & \CC & 1.73 & 1.75 & 1.63 & \textbf{1.01} \\
\GT{Chinese} & \GT{concat} & Llama-4 & \XX & 1.75 & 1.79 & 1.67 & \textbf{1.02} \\
\GT{Chinese} & \GT{concat} & Mistral-3 & \CC & 1.73 & 1.80 & 1.71 & \textbf{1.04} \\
\GT{Chinese} & \GT{concat} & Mistral-3 & \XX & 1.75 & 1.82 & 1.72 & \textbf{1.04} \\
\GT{Chinese} & \GT{concat} & Ouro & \CC & 1.48 & 1.51 & 1.48 & \textbf{1.02} \\
\GT{Chinese} & \GT{concat} & Ouro & \XX & 1.51 & 1.53 & 1.49 & \textbf{1.01} \\
\GT{Chinese} & \GT{concat} & Qwen-3 & \CC & 1.59 & 1.65 & 1.54 & \textbf{1.04} \\
\GT{Chinese} & \GT{concat} & Qwen-3 & \XX & 1.58 & 1.65 & 1.55 & \textbf{1.04} \\
\multicolumn{8}{l}{\small\textit{OpenAI's \texttt{tiktoken}}} \\
\GT{Chinese} & \GT{concat} & CL100K & \NA & 9.36 & 14.92 & 11.71 & \textbf{1.59} \\
\GT{Chinese} & \GT{concat} & O200K & \NA & 7.10 & 10.39 & 8.63 & \textbf{1.46} \\
\GT{Chinese} & \GT{concat} & P50K & \NA & 10.03 & 13.57 & 10.98 & \textbf{1.35} \\
\GT{Chinese} & \GT{concat} & R50K & \NA & 10.01 & 13.55 & 10.95 & \textbf{1.35} \\
\midrule
\multicolumn{8}{l}{\small\textit{Hugging Face's \texttt{tokenizers}}} \\
\GT{Chinese} & \GT{per-doc} & CodeLlama & \CC & 4.30 & 4.69 & 4.37 & \textbf{1.09} \\
\GT{Chinese} & \GT{per-doc} & CodeLlama & \XX & 4.33 & 4.71 & 4.45 & \textbf{1.09} \\
\GT{Chinese} & \GT{per-doc} & DeepSeek-3.2 & \CC & 1.94 & 1.76 & 1.60 & 0.91 \\
\GT{Chinese} & \GT{per-doc} & DeepSeek-3.2 & \XX & 1.96 & 1.77 & 1.62 & 0.91 \\
\GT{Chinese} & \GT{per-doc} & GPT-OSS & \CC & 2.25 & 2.41 & 2.22 & \textbf{1.07} \\
\GT{Chinese} & \GT{per-doc} & GPT-OSS & \XX & 2.28 & 2.46 & 2.25 & \textbf{1.08} \\
\GT{Chinese} & \GT{per-doc} & Llama-3.1 & \CC & 2.47 & 2.54 & 2.31 & \textbf{1.03} \\
\GT{Chinese} & \GT{per-doc} & Llama-3.1 & \XX & 2.51 & 2.58 & 2.34 & \textbf{1.03} \\
\GT{Chinese} & \GT{per-doc} & Llama-4 & \CC & 2.20 & 2.28 & 2.06 & \textbf{1.04} \\
\GT{Chinese} & \GT{per-doc} & Llama-4 & \XX & 2.26 & 2.30 & 2.09 & \textbf{1.02} \\
\GT{Chinese} & \GT{per-doc} & Mistral-3 & \CC & 2.31 & 2.44 & 2.22 & \textbf{1.06} \\
\GT{Chinese} & \GT{per-doc} & Mistral-3 & \XX & 2.34 & 2.48 & 2.27 & \textbf{1.06} \\
\GT{Chinese} & \GT{per-doc} & Ouro & \CC & 2.19 & 2.25 & 2.17 & \textbf{1.03} \\
\GT{Chinese} & \GT{per-doc} & Ouro & \XX & 2.23 & 2.30 & 2.20 & \textbf{1.03} \\
\GT{Chinese} & \GT{per-doc} & Qwen-3 & \CC & 2.00 & 2.15 & 1.93 & \textbf{1.08} \\
\GT{Chinese} & \GT{per-doc} & Qwen-3 & \XX & 1.97 & 2.15 & 1.94 & \textbf{1.09} \\
\multicolumn{8}{l}{\small\textit{OpenAI's \texttt{tiktoken}}} \\
\GT{Chinese} & \GT{per-doc} & CL100K & \NA & 9.75 & 16.00 & 12.37 & \textbf{1.64} \\
\GT{Chinese} & \GT{per-doc} & O200K & \NA & 7.37 & 11.09 & 9.09 & \textbf{1.50} \\
\GT{Chinese} & \GT{per-doc} & P50K & \NA & 10.56 & 14.50 & 11.52 & \textbf{1.37} \\
\GT{Chinese} & \GT{per-doc} & R50K & \NA & 10.55 & 14.45 & 11.46 & \textbf{1.37} \\
\midrule
\multicolumn{8}{l}{\small\textit{Hugging Face's \texttt{tokenizers}}} \\
\GT{Code} & \GT{concat} & CodeLlama & \CC & 0.82 & 2.37 & 2.21 & \textbf{2.88} \\
\GT{Code} & \GT{concat} & CodeLlama & \XX & 0.81 & 2.34 & 2.17 & \textbf{2.87} \\
\GT{Code} & \GT{concat} & DeepSeek-3.2 & \CC & 1.52 & 1.56 & 1.54 & \textbf{1.03} \\
\GT{Code} & \GT{concat} & DeepSeek-3.2 & \XX & 1.45 & 1.54 & 1.48 & \textbf{1.07} \\
\GT{Code} & \GT{concat} & GPT-OSS & \CC & 1.86 & 1.88 & 1.87 & \textbf{1.01} \\
\GT{Code} & \GT{concat} & GPT-OSS & \XX & 1.88 & 1.88 & 1.87 & 1.00 \\
\GT{Code} & \GT{concat} & Llama-3.1 & \CC & 1.78 & 1.80 & 1.78 & \textbf{1.02} \\
\GT{Code} & \GT{concat} & Llama-3.1 & \XX & 1.82 & 1.86 & 1.82 & \textbf{1.02} \\
\GT{Code} & \GT{concat} & Llama-4 & \CC & 1.84 & 1.85 & 1.83 & 1.00 \\
\GT{Code} & \GT{concat} & Llama-4 & \XX & 1.86 & 1.85 & 1.83 & 0.99 \\
\GT{Code} & \GT{concat} & Mistral-3 & \CC & 1.80 & 1.79 & 1.78 & 0.99 \\
\GT{Code} & \GT{concat} & Mistral-3 & \XX & 1.79 & 1.80 & 1.79 & \textbf{1.01} \\
\GT{Code} & \GT{concat} & Ouro & \CC & 1.53 & 1.57 & 1.54 & \textbf{1.02} \\
\GT{Code} & \GT{concat} & Ouro & \XX & 1.48 & 1.55 & 1.50 & \textbf{1.05} \\
\GT{Code} & \GT{concat} & Qwen-3 & \CC & 1.54 & 1.65 & 1.62 & \textbf{1.08} \\
\GT{Code} & \GT{concat} & Qwen-3 & \XX & 1.38 & 1.59 & 1.54 & \textbf{1.15} \\
\multicolumn{8}{l}{\small\textit{OpenAI's \texttt{tiktoken}}} \\
\GT{Code} & \GT{concat} & CL100K & \NA & 8.16 & 8.50 & 7.96 & \textbf{1.04} \\
\GT{Code} & \GT{concat} & O200K & \NA & 5.49 & 5.51 & 5.41 & 1.00 \\
\GT{Code} & \GT{concat} & P50K & \NA & 8.29 & 8.90 & 8.05 & \textbf{1.07} \\
\GT{Code} & \GT{concat} & R50K & \NA & 8.43 & 8.85 & 8.01 & \textbf{1.05} \\
\midrule
\multicolumn{8}{l}{\small\textit{Hugging Face's \texttt{tokenizers}}} \\
\GT{Code} & \GT{per-doc} & CodeLlama & \CC & 2.11 & 3.62 & 3.17 & \textbf{1.72} \\
\GT{Code} & \GT{per-doc} & CodeLlama & \XX & 2.11 & 3.62 & 3.16 & \textbf{1.72} \\
\GT{Code} & \GT{per-doc} & DeepSeek-3.2 & \CC & 1.94 & 2.04 & 1.98 & \textbf{1.06} \\
\GT{Code} & \GT{per-doc} & DeepSeek-3.2 & \XX & 1.81 & 2.00 & 1.91 & \textbf{1.11} \\
\GT{Code} & \GT{per-doc} & GPT-OSS & \CC & 2.37 & 2.41 & 2.38 & \textbf{1.01} \\
\GT{Code} & \GT{per-doc} & GPT-OSS & \XX & 2.45 & 2.51 & 2.47 & \textbf{1.02} \\
\GT{Code} & \GT{per-doc} & Llama-3.1 & \CC & 2.36 & 2.46 & 2.38 & \textbf{1.04} \\
\GT{Code} & \GT{per-doc} & Llama-3.1 & \XX & 2.43 & 2.51 & 2.43 & \textbf{1.03} \\
\GT{Code} & \GT{per-doc} & Llama-4 & \CC & 2.36 & 2.37 & 2.37 & \textbf{1.01} \\
\GT{Code} & \GT{per-doc} & Llama-4 & \XX & 2.30 & 2.37 & 2.33 & \textbf{1.03} \\
\GT{Code} & \GT{per-doc} & Mistral-3 & \CC & 2.30 & 2.36 & 2.32 & \textbf{1.03} \\
\GT{Code} & \GT{per-doc} & Mistral-3 & \XX & 2.34 & 2.41 & 2.36 & \textbf{1.03} \\
\GT{Code} & \GT{per-doc} & Ouro & \CC & 2.01 & 2.08 & 2.02 & \textbf{1.03} \\
\GT{Code} & \GT{per-doc} & Ouro & \XX & 1.87 & 2.01 & 1.93 & \textbf{1.07} \\
\GT{Code} & \GT{per-doc} & Qwen-3 & \CC & 1.84 & 2.08 & 2.02 & \textbf{1.13} \\
\GT{Code} & \GT{per-doc} & Qwen-3 & \XX & 1.64 & 2.03 & 1.92 & \textbf{1.24} \\
\multicolumn{8}{l}{\small\textit{OpenAI's \texttt{tiktoken}}} \\
\GT{Code} & \GT{per-doc} & CL100K & \NA & 8.35 & 8.76 & 8.19 & \textbf{1.05} \\
\GT{Code} & \GT{per-doc} & O200K & \NA & 5.58 & 5.64 & 5.43 & \textbf{1.01} \\
\GT{Code} & \GT{per-doc} & P50K & \NA & 8.54 & 9.25 & 8.31 & \textbf{1.08} \\
\GT{Code} & \GT{per-doc} & R50K & \NA & 8.66 & 9.18 & 8.21 & \textbf{1.06} \\

\end{longtable}

\section{Profiling Analysis of Tokenization Pipelines}
\label{app:flamegraphs}

To better understand the system-level bottlenecks in existing tokenization pipelines,
we profile the execution using Linux \texttt{perf}
and visualize the results with \texttt{inferno} \citep{inferno}.
All flame graphs are collected from end-to-end tokenization runs,
including normalization, pre-tokenization, and BPE,
under the same configurations as the benchmark experiments in the main text.

For clarity of presentation, we apply light post-processing to the raw call stacks.
Specifically, we retain only frames corresponding to the tokenization pipeline,
truncate excessively deep stacks,
and strip type and template information,
preserving only function-level symbols and the structural hierarchy of call stacks.
These transformations preserve the relative time distribution across pipeline stages
while improving readability.

\begin{figure}[htbp]
  \centering
  \includegraphics[width=\linewidth]{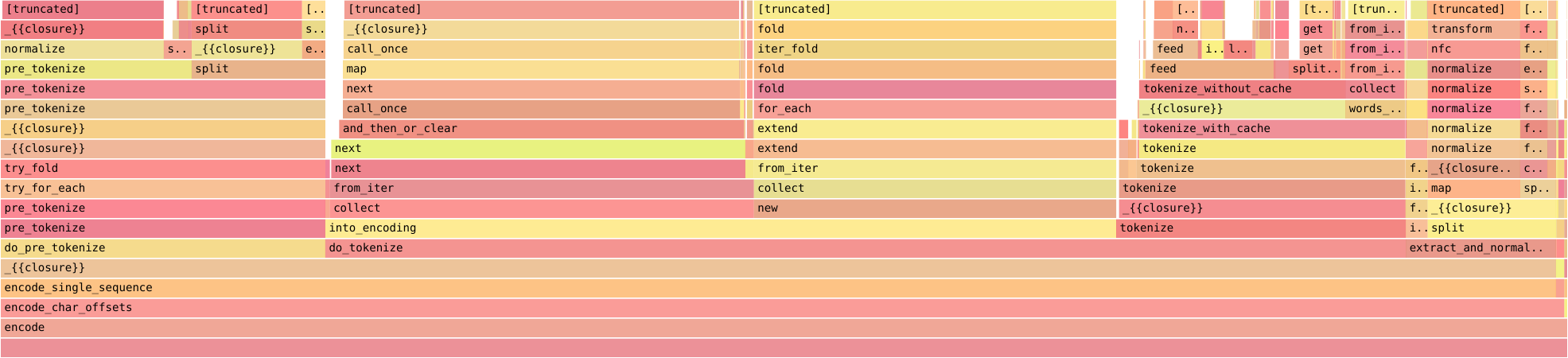}
  \caption{
    \textbf{Flame graph of \texttt{tokenizers} (Qwen-3) execution with our incremental non-eager implementation on the Code dataset.}
    The BPE merge phase (\texttt{tokenize\_without\_cache}) accounts for only 13.11\% of the total execution time.
    The remaining time is dominated by normalization, pre-tokenization,
    and result construction cost (e.g., cloning token strings).
  }
  \label{fig:flamegraph-tokenizers}
\end{figure}

\begin{figure}[htbp]
  \centering
  \includegraphics[width=\linewidth]{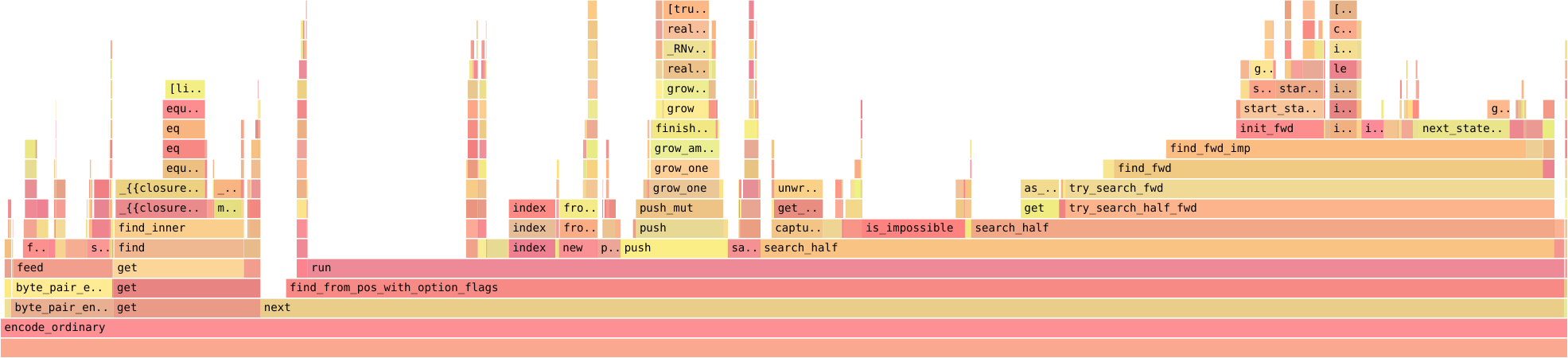}
  \caption{
    \textbf{Flame graph of \texttt{tiktoken} (O200K) execution with our incremental non-eager implementation on the Code dataset.}
    The regex matching phase (dominated by \texttt{find\_from\_pos\_with\_option\_flags} and \texttt{run})
    consumes approximately 80.25\% of the total CPU time.
    In contrast, the actual BPE merge phase (\texttt{byte\_pair\_encode} on the left) accounts for only 6.45\%.
    This indicates that, under this configuration,
    the pre-tokenization guardrail imposes a computational overhead
    exceeding 12$\times$ the cost of the core BPE merge stage.
  }
  \label{fig:flamegraph-tiktoken}
\end{figure}

\begin{figure}[htbp]
  \centering

  \begin{tabular}{ @{} m{0.02\textwidth} m{0.44\textwidth} m{0.44\textwidth} m{0.02\textwidth} @{} }
    \centering\rotatebox[origin=l]{90}{\footnotesize $k=6$} &
    \includegraphics[width=\linewidth]{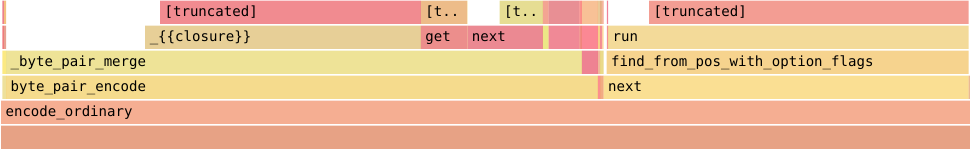} &
    \includegraphics[width=\linewidth]{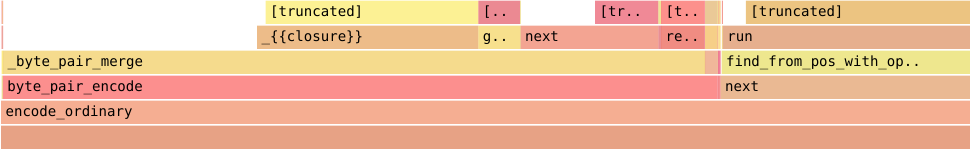} &
    \centering\rotatebox[origin=r]{-90}{\footnotesize $k=7$} \tabularnewline

    \centering\rotatebox[origin=l]{90}{\footnotesize $k=8$} &
    \includegraphics[width=\linewidth]{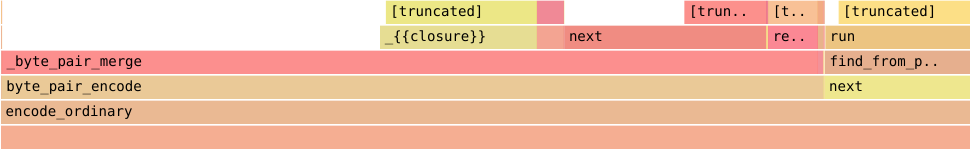} &
    \includegraphics[width=\linewidth]{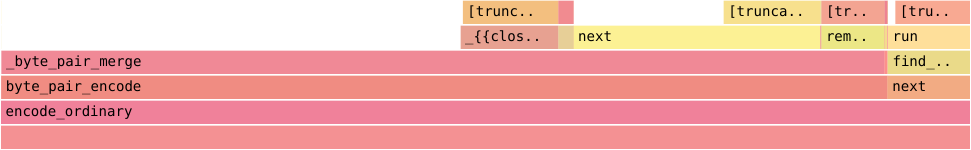} &
    \centering\rotatebox[origin=r]{-90}{\footnotesize $k=9$} \tabularnewline

    \centering\rotatebox[origin=l]{90}{\footnotesize $k=10$} &
    \includegraphics[width=\linewidth]{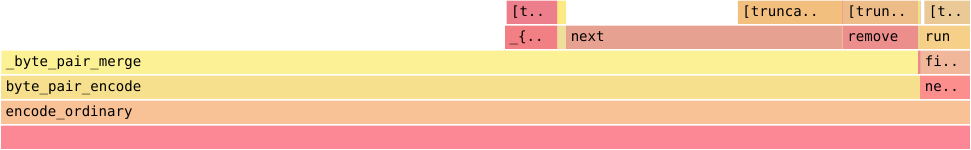} &
    \includegraphics[width=\linewidth]{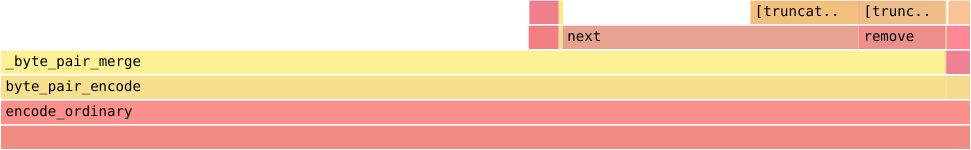} &
    \centering\rotatebox[origin=r]{-90}{\footnotesize $k=11$} \tabularnewline
  \end{tabular}

  \caption{
    \textbf{Flame graphs of \texttt{tiktoken} using the original baseline implementation
    under pathological inputs with lengths from $2^6$ to $2^{11}$.}
    The BPE merge phase (\texttt{byte\_pair\_encode} on the left) becomes increasingly dominant
    as the input length grows, consistent with its $\mathcal{O}(n^2)$ time complexity,
    while the relative contribution of regex matching diminishes.
  }
  \label{fig:flamegraph-pathologic}
\end{figure}

\cref{fig:flamegraph-tokenizers}
presents a representative flame graph
for Hugging Face's \texttt{tokenizers} library with the tokenizer Qwen-3,
using our incremental non-eager BPE implementation on the Code dataset.
The BPE merge phase itself (\texttt{tokenize\_without\_cache})
constitutes a minor fraction of the total runtime,
highlighting that, in this pipeline,
the overall throughput is primarily constrained by non-BPE components.

\cref{fig:flamegraph-tiktoken}
profiles \texttt{tiktoken} (O200K) when replacing its BPE stage
with our incremental non-eager implementation on the Code dataset.
It highlights that the bottleneck shifts to regex-based pre-tokenization,
even when the BPE merge phase itself is asymptotically efficient.

\cref{fig:flamegraph-pathologic}
profiles the original \texttt{tiktoken} baseline under pathological inputs of increasing length.
As the input size grows,
the BPE merge phase increasingly dominates the execution time,
providing a fine-grained explanation for the throughput trends
reported in \cref{fig:plot-pathologic}.
These pathological inputs are not intended to model realistic workloads,
but to isolate and stress-test the asymptotic behavior of the BPE merge stage.

\section{Comparison with Other Incremental BPE Implementations}
\label{app:other-impl}

In this section,
we provide a detailed theoretical and empirical comparison between our algorithm
and other existing incremental BPE implementations in Rust,
specifically those found in the crate \texttt{bpe} of the GitHub repository \texttt{rust-gems}~\citep{rust-gems}.

For the discussion about time and space complexity,
let $\Sigma$ be the size of the alphabet,
$n$ be the input length,
$m$ the vocabulary size,
$t_i$ the length of the $i$-th token,
and $t$ the maximum token length.

Within \texttt{rust-gems},
there are three main implementations of BPE
as the methods of the structure \texttt{BytePairEncoding}.
The \texttt{encode\_via\_bitfield} method utilizes a binary heap
and does not support incremental processing,
placing it outside the scope of discussion.
The remaining two methods,
\texttt{encode\_via\_table} and \texttt{encode\_via\_backtracking},
support incremental tokenization and are analyzed in the following.

All empirical tests and evaluations were performed
by directly replacing the core BPE implementation in the \texttt{tiktoken} \citep{tiktoken} library.
The tests were conducted on a Virtual Private Server (VPS),
utilizing 8 cores of an Intel\textsuperscript{\textregistered} Xeon\textsuperscript{\textregistered} Gold 6448Y CPU situated on a single NUMA node,
alongside a 16~GB slice of RAM.

\subsection{Theoretical Time Complexity}

Both \texttt{encode\_via\_table} and \texttt{encode\_via\_backtracking}
rely on a core underlying function: \texttt{is\_valid\_token\_pair}.
This function checks whether two tokens can be placed adjacently,
ensuring that applying BPE to their concatenated string yields exactly the same two tokens.

Assuming the ``last token'' of a string is determined,
if a new token can be validly placed adjacent to it,
the ``last token'' of the newly formed string should exactly be this new token.

The validation process takes two tokens,
iteratively un-merges either the first token
to its successor or the second token to its predecessor
according to the rule with the lower priority,
and verifies if the token boundary survives.
This verification takes $\mathcal{O}(t)$ time.
Although it can be pre-computed or cached to achieve $\mathcal{O}(1)$ query time,
the table will be $\mathcal{O}(m^2)$,
which may not be preferable for production.

\paragraph{Method 1: \texttt{encode\_via\_table}}
This method processes the tokenization with an Aho--Corasick automaton.
At each step, it iterates through the matching suffix tokens
from longest to shortest for the current prefix,
validating each suffix token via \texttt{is\_valid\_token\_pair}.

\begin{itemize}
  \item \textbf{Complexity}:
    In the ``best'' case scenario,
    where the longest match is always the correct last token,
    it still requires at least one validation per step, yielding a lower bound of $\Omega(n)$ checks.
    In the worst case, all suffix tokens must be checked, resulting in $\mathcal{O}(n t^2)$ time complexity.
  \item \textbf{Eager Output}:
    Because this method maintains the prefix tree of tokens essentially,
    it can theoretically be adapted to support our proposed eager output mechanism.
\end{itemize}

\paragraph{Method 2: \texttt{encode\_via\_backtracking}}
This method employs a greedy strategy.
It attempts to find the longest token among the prefixes of the un-tokenized string suffix,
verifying it against the currently maintained token sequence via \texttt{is\_valid\_token\_pair}.
During the tokenization, it maintains a bit-field to record the positions that cause backtracking
to avoid re-computation.

It is notable that the algorithm will backtrack
if the current token sequence corresponds to
a path from a leaf to the root in the prefix tree of tokens,
which indicates no subsequent prefix token will form a valid sequence.

\begin{itemize}
  \item \textbf{Complexity:}
    In the ``best'' case, where the greedy choice is always correct,
    the validation cost is amortized over the token length, yielding $\Omega(n)$.
    However, in the worst case, whenever the algorithm backtracks,
    it incurs the cost of validating all possible prefix tokens with the last token,
    yielding $\mathcal{O}(n t^2)$.
  \item \textbf{Eager Output:}
    As the algorithm is greedy,
    there is no trivial way to apply our eager output mechanism to this implementation.
\end{itemize}

\cref{tab:rust-gems-time-complexity}
summarizes the complexity of these methods alongside our proposed algorithm.
Our method incorporates a simple optimization where we first verify the longest suffix token
before traversing the Suffix-Successor Tree,
ensuring our ``best'' case also achieves an $\Omega(n)$ lower bound.

\begin{table}[htbp]
\centering
\caption{\textbf{Overall complexity and feature comparison.}}
\label{tab:rust-gems-time-complexity}
\begin{tabular}{lccc}
\toprule
\textbf{Method} & \textbf{Worst-Case} & \textbf{``Best''-Case} & \textbf{Eager Output} \\
\midrule
\texttt{encode\_via\_table} & $\mathcal{O}(n t^2)$ & $\Omega(n)$\textsuperscript{\dag} & Yes \\
\texttt{encode\_via\_backtracking} & $\mathcal{O}(n t^2)$ & $\Omega(n)$ & No \\
\textbf{Ours} & $\mathcal{O}(n \log^2 t)$ & $\Omega(n)$ & Yes \\
\bottomrule
\end{tabular}
\begin{flushleft}
\scriptsize
\textsuperscript{\dag} Assume $\Omega(1)$ time for each check.
\end{flushleft}
\end{table}

\subsection{Constructed Pathological Case}

To approach the worst-case complexity,
we designed an adversarial corner case.
This construction deliberately traps algorithms relying on greedy or longest-prefix matching
into massive validation and backtracking cycles.

First, to achieve a merge depth of $K=4096$,
we expand the base alphabet from single bytes to 2-byte pairs.
To strictly prevent accidental cross-boundary merges,
we construct base tokens $B_1, B_2, \ldots, B_K$ from $\mathcal{B}$:
\[\mathcal{B} = \{(i, j) \mid i, j \in \{0, 1, \dots, 255\}, i < j\}\text{.}\]
The structural constraint of $\mathcal{B}$ guarantees that
placing any two base tokens adjacent to each other will not trigger unintended cross-boundary merges.

Next, using these $K$ distinct base tokens,
we construct a sequence $\mathcal{S}$ of $4 K$ bytes:
\[\mathcal{S} = B_1 \cdots B_K B_K \cdots B_1\text{.}\]

We then strategically assign merge rule priorities to establish the trap:
the absolute highest priority is assigned to the rule
merging the two identical tokens located exactly at the sequence center,
i.e., $(B_K, B_K)$.
Subsequently, we inject deep nested merge rules originating from the center towards the two ends,
i.e.:
\[(B_{K-1}, B_K), (B_{K-2}, B_{K-1} B_K), \ldots \quad\text{and}\quad (B_K, B_{K-1}), (B_K B_{K-1}, B_{K-2}), \ldots\]
establishing a merge depth of $\mathcal{O}(K)$.

The input string is constructed as a concatenation of 128 repetitions of $\mathcal{S}$.
Under this pathological input,
our method, with worst-case time complexity guarantees,
required \textbf{37~ms} to tokenize the sequence.
In stark contrast, \texttt{encode\_via\_table} and \texttt{encode\_via\_backtracking}
required \textbf{15.3~s} and \textbf{23.9~s}, respectively.

\subsection{Space Complexity}

\paragraph{Initialization}

Our implementation utilizes a Square-Root Tiled Transition Table for $\mathcal{O}(1)$ transitions
(detailed in Appendix~\ref{app:sqrt-trans-table}),
requiring $\Theta(\sqrt{\Sigma})$ space per state.
In contrast, \texttt{rust-gems} uses a double-array implementation of Aho-Corasick automata~\citep{double-array},
which performs transitions in $\mathcal{O}(1)$ time and normally requires $\mathcal{O}(1)$ space per state,
but potentially up to $\mathcal{O}(\Sigma)$ in the worst case.

Additionally, our method requires pre-computing a Centroid Search Tree
based on Centroid Decomposition for each Suffix-Successor Tree,
leading to an initialization space complexity of $\mathcal{O}(\sum t_i) = \mathcal{O}(m t)$.
\texttt{rust-gems} requires no more than $\mathcal{O}(m)$ auxiliary space.

\paragraph{Tokenization and Eager Output}
During standard non-eager tokenization,
both our method and the \texttt{rust-gems} implementations
require $\mathcal{O}(n)$ space to record the token sequence.

However, space complexity behavior differs when applying our eager output mechanism.
As detailed in \cref{sec:eager-output},
the algorithm maintains the state using a two-pointer approach.
Token nodes may persist in the state even after they cease to belong to $\mathcal{P}$,
because they lie in the window.
Therefore, the space complexity relates to the maximum number of nodes
simultaneously maintained by the two-pointer algorithm.
While a theoretically safe upper bound is $\mathcal{O}(\min(n, m t))$,
empirical evaluations show that the practical memory footprint is much smaller.

\cref{tab:eager-output-window-size}
demonstrates the maximum number of nodes in the window maintained by the two-pointer algorithm
with the datasets described in \cref{tab:datasets}.

\begin{table}[htbp]
\centering
\caption{\textbf{Maximum size in the window during the two-pointer algorithm of eager output.}}
\label{tab:eager-output-window-size}
\begin{tabular}{lccc}
  \toprule
  \textbf{Tokenizer} & \textbf{English} & \textbf{Chinese} & \textbf{Code} \\
  \midrule
  P50K & 26 & 35 & 96 \\
  R50K & 21 & 34 & 96 \\
  CL100K & 32 & 129 & 212 \\
  O200K & 31 & 129 & 208 \\
  \bottomrule
\end{tabular}
\end{table}

\subsection{End-to-End Benchmarks}

We evaluated the end-to-end tokenization speeds for \texttt{rust-gems} and ours.
The tokenizer model P50K failed to run with \texttt{rust-gems};
thus, it was excluded from the benchmarks.

The median CPU times (in seconds) across standard datasets are presented
in \cref{tab:rust-gems-e2e}.
The results are also demonstrated as a violin plot in \cref{fig:rust-gems-e2e}.
The baseline implementation is the original BPE implementation in \texttt{tiktoken}.
Our incremental approach consistently matches or outperforms the backtracking and table-based implementations.

\begin{table}[htbp]
\centering
\caption{
  \textbf{End-to-end CPU time (seconds, median) compared with other incremental BPE implementations.}
  Lower is better.
}
\label{tab:rust-gems-e2e}
\begin{tabular}{llccc}
  \toprule
  \textbf{Model} & \textbf{Method} & \textbf{English} & \textbf{Chinese} & \textbf{Code} \\
  \midrule
  \multirow{5}{*}{\textbf{R50K}}
  & Baseline & 0.932 & 1.244 & 1.501 \\
  & \textbf{Ours (inc)} & \textbf{0.925} & \textbf{0.897} & \textbf{1.369} \\
  & Ours (eager) & 0.989 & 1.117 & 1.510 \\
  & \texttt{via\_backtracking} & 0.948 & 1.180 & 1.431 \\
  & \texttt{via\_table} & 1.004 & 1.243 & 1.566 \\
  \midrule
  \multirow{5}{*}{\textbf{CL100K}}
  & Baseline & 1.068 & 1.374 & 1.535 \\
  & \textbf{Ours (inc)} & \textbf{1.064} & \textbf{0.815} & \textbf{1.422} \\
  & Ours (eager) & 1.135 & 1.050 & 1.515 \\
  & \texttt{via\_backtracking} & 1.090 & 1.048 & 1.441 \\
  & \texttt{via\_table} & 1.140 & 1.254 & 1.579 \\
  \midrule
  \multirow{5}{*}{\textbf{O200K}}
  & Baseline & 1.768 & 1.729 & 2.082 \\
  & \textbf{Ours (inc)} & \textbf{1.739} & \textbf{1.047} & \textbf{2.032} \\
  & Ours (eager) & 1.818 & 1.308 & 2.081 \\
  & \texttt{via\_backtracking} & 1.782 & 1.275 & 2.087 \\
  & \texttt{via\_table} & 1.839 & 1.732 & 2.121 \\
  \bottomrule
\end{tabular}
\end{table}

\begin{figure}[htbp]
  \begin{center}
    \centerline{\includegraphics[width=\columnwidth]{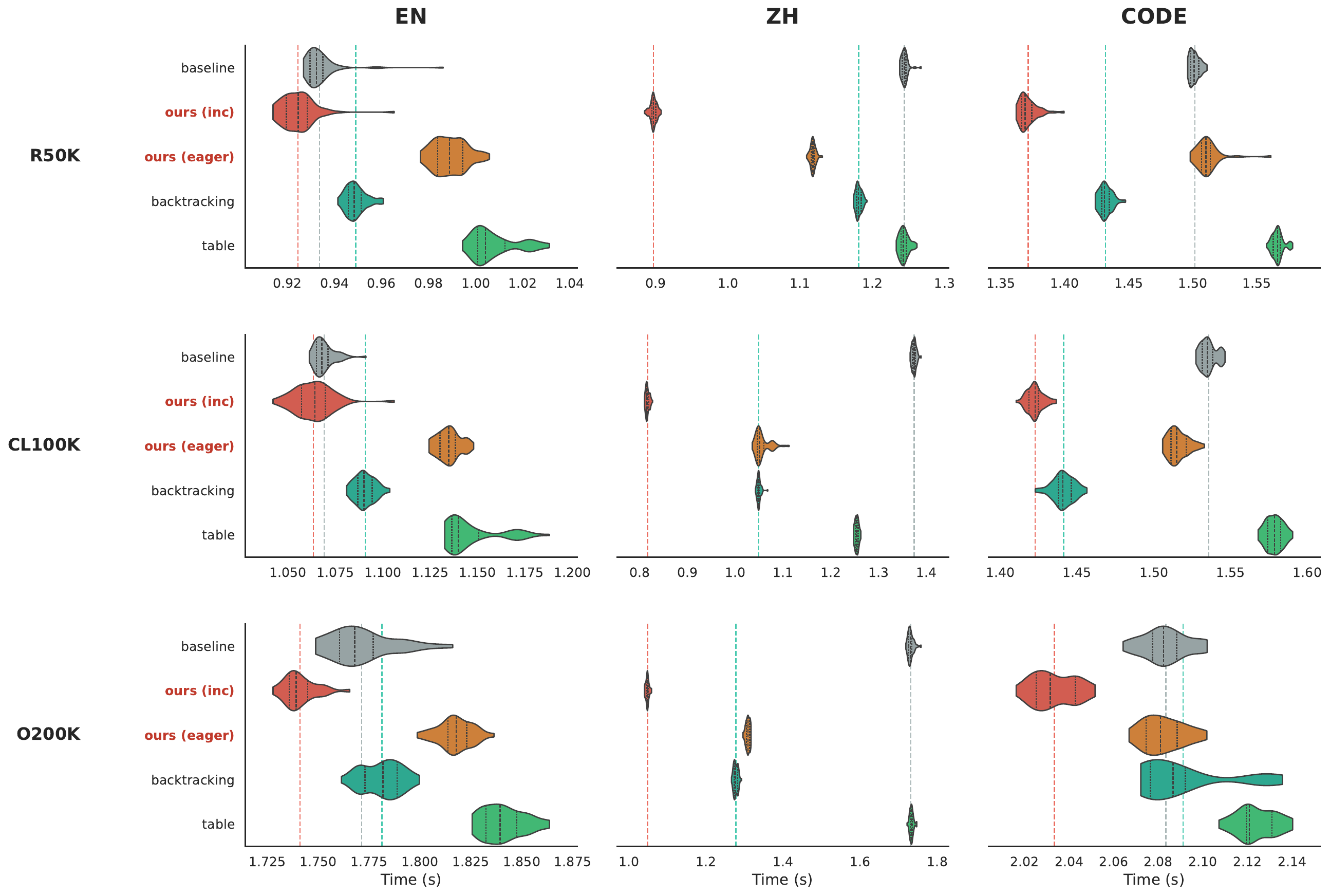}}
    \caption{
      \textbf{Distributions of end-to-end execution times for different BPE implementations.}
      The violin plots illustrate the latency (in seconds, lower is better)
      across three tokenizers (R50K, CL100K, O200K) and three datasets (English, Chinese, Code).
      All methods were evaluated by replacing the core BPE in the \texttt{tiktoken} library.
      Specifically, baseline refers to the original BPE implementation of \texttt{tiktoken},
      while \texttt{backtracking} and \texttt{table} correspond to
      the \texttt{encode\_via\_backtracking} and \texttt{encode\_via\_table} methods
      from \texttt{rust-gems}, respectively.
      Colored dashed vertical lines indicate the mean execution times
      for the baseline, \textbf{ours (inc)}, and \texttt{backtracking} implementations.
    }
    \label{fig:rust-gems-e2e}
  \end{center}
\end{figure}

\end{document}